
\documentclass[letterpaper,twocolumn,10pt]{article}
\usepackage{usenix}
\usepackage{enumitem}
\usepackage{multirow}
\usepackage{tikz}
\usepackage{amsmath,amssymb,amstext,amsthm} 
\usepackage{ifthen}
\DeclareSymbolFont{newfont}{OML}{cmm}{m}{it}
\DeclareMathSymbol{\varrho}{3}{newfont}{37}
\usepackage[
	n,
	operators,
	advantage,
	sets,
	adversary,
	landau,
	probability,
	notions,	
	logic,
	ff,
	mm,
	primitives,
	events,
	complexity,
	asymptotics,
	keys]{cryptocode}
\usepackage{adjustbox}
\usepackage{booktabs}
\usepackage{filecontents}
\newtheorem{theorem}{Theorem}
\newtheorem{definition}{Definition}
\newtheorem{lemma}{Lemma}

\usepackage[numbers,sort&compress]{natbib}
\usepackage{pifont}

\begin{document}

\date{}

\title{\Large \bf SoK: Analyzing Adversarial Examples: A Framework to Study Adversary Knowledge}

\author{
{\rm Lucas Fenaux}\\
University of Waterloo \\
lucas.fenaux@uwaterloo.ca
\and
{\rm Florian Kerschbaum}\\
University of Waterloo \\
florian.kerschbaum@uwaterloo.ca
} 
\maketitle

\begin{abstract}

Adversarial examples are malicious inputs to machine learning models that trigger a misclassification. This type of attack has been studied for close to a decade, and we find that there is a lack of study and formalization of adversary knowledge when mounting attacks. This has yielded a complex space of attack research with hard-to-compare threat models and attacks. We focus on the image classification domain and provide a theoretical framework to study adversary knowledge inspired by work in order theory. We present an adversarial example game, inspired by cryptographic games, to standardize attacks. We survey recent attacks in the image classification domain and classify their adversary's knowledge in our framework. From this systematization, we compile results that both confirm existing beliefs about adversary knowledge, such as the potency of information about the attacked model as well as allow us to derive new conclusions on the difficulty associated with the white-box and transferable threat models, for example, that transferable attacks might not be as difficult as previously thought. 

\end{abstract}

\section{Introduction}
Many machine learning (ML) models are currently deployed in critical environments like healthcare \cite{ml_healthcare}, self-driving cars \cite{GUPTA2021100057}, aerospace and aviation \cite{Ni_2022}, and many more. 
Model failures in these settings can directly incur grave consequences on human life.  While there exists a large body of literature on the safe and secure deployment of these models, 
the proposed solutions remain mainly academic, difficult to implement and replicate, and fall short of the security and safety standards we would typically expect from critical systems.
Despite all of this, the deployment of ML models continues to grow. This surge in deployment, while promising, introduces a pressing concern: the vulnerability of these models to security flaws, both known and yet to be uncovered. 
The potential consequences of such vulnerabilities extend far beyond the limits of academia, with economic and societal implications.

Adversarial example attacks are one such attack, originally discovered by Szegedy et al. \cite{szegedy2014intriguing} in 2014.
They have shown the existence of easily craftable imperceptible vulnerabilities in image classifiers. These vulnerabilities have taken the form of perturbations, imperceptible to the human eye, that can significantly alter a model's prediction on an otherwise benign input.
The low requirements for mounting adversarial attacks make it a prime vulnerability.
Prior work in the domain of adversarial examples suggests a notable advantage for attackers, with the majority of attacks capable of, to some degree, impeding model performance. 
Among the existing defenses, few techniques have demonstrated a degree of robustness when it comes to safeguarding machine learning models, for example, adversarial training methods \cite{madry2019deep} and ensemble approaches \cite{tramer2020ensemble} (and their combination).
However, even in their strongest forms and under strict threat models that greatly limit the attacker's capabilities, current defenses still fail to provide adequate protection. To remedy this, a branch of research on provably robust defenses emerged \cite{cohen2019certified}.
However, in practice, this is not yet a feasible approach since it increases the computational cost of model inference by several orders of magnitude. Since provable defenses are impractical, a majority of the field has focused on improving the quality of empirical defenses.
Nevertheless, the absence of a reliable evaluation standard or assessment method for overall performance has resulted in an ongoing competitive cycle between attackers and defenders based on empirical experiments. 
A complete theoretical framework is necessary to create an evaluation standard and shift this competitive cycle into a manageable challenge for adversarial example research. 

In our work, we focus on the image classification domain and aim to address a significant challenge: the current lack of emphasis on rigorous assessment of adversary knowledge in threat models. 
Adversary knowledge is one of the key components of a well-defined threat model and has long been overlooked and neglected by both attack and defense works. 
We study the current lack of formalization of adversary knowledge in adversarial example research and how it affects attack comparability and performance.
Defending against attackers with ill-defined capabilities places an additional burden on the defender. Whereas attacking with ill-defined knowledge hinders reproducibility and comparability.
We develop this formalization as a theoretical framework to model various adversarial example attacks. 
We conduct a thorough review of recent attacks in the image classification domain, offering an up-to-date overview of the current landscape of adversaries in this specific context \cite{zhang2022imagenet, Liu_Lu_Xiong_Zhang_Xiong_2023, Shamshad_Naseer_Nandakumar_2023, Zhong_2022_CVPR, Zou_Duan_Li_Zhang_Pan_Pan_2022, Wei_Zhao_2023, Wei_Chen_Wu_Jiang_2023, Weng_Luo_Lin_Li_Zhong_2023, Zhang_Tan_Sun_Zhao_Zhang_Li_2023,
Tao_Wu_2022, 10147340, Deng_Xiao_Li_He_Wang_2023, Ge_Shang_Liu_Liu_Wang_2023, Ge_Shang_Liu_Liu_Wan_Feng_Wang_2023, Liu_Ge_Zhou_Shang_Liu_Jiao_2023, Wang_Yang_Feng_Sun_Guo_Zhang_Ren_2023, Tramer_2022, Byun_Kwon_Cho_Kim_Kim_2023, Chen_Yin_Chen_Chen_Liu_2023,
 Li_Guo_Yang_Zuo_Chen_2023, Li_Guo_Zuo_Chen_2023, Qin_Fan_Liu_Shen_Zhang_Wang_Wu_2022, Yang_Lin_Zhang_Yang_Zhao_2023, Yang_Lin_Li_Shen_Zhou_Wang_Liu_2023, Qi_Huang_Panda_Henderson_Wang_Mittal_2023, Pintor_Demetrio_Sotgiu_Demontis_Carlini_Biggio_Roli_2022, Pomponi_2022, Li_decisionbased2022, liu2022practical, liu2022transferable, Apruzzese_Anderson_Dambra_Freeman_Pierazzi_Roundy_2023, 
 Agarwal_Ratha_Vatsa_Singh_2022, Aldahdooh_Hamidouche_Fezza_Deforges_2022, Altinisik_Messaoud_Sencar_Chawla_2023, Byun_2022_CVPR, chenImageRetrieval2023, Chen_Liu_2023, Crecchi_Melis_Sotgiu_Bacciu_Biggio_2022, Dai_Feng_Chen_Lu_Xia_2022, Dunmore_Jang-Jaccard_Sabrina_Kwak_2023, Dyrmishi_Ghamizi_Simonetto_Traon_Cordy_2023, Freiesleben_2022, 
 Gong_Deng_2022, Gubri_Cordy_Papadakis_Traon_Sen_2022, Hernandez-Castro_Liu_Serban_Tsingenopoulos_Joosen_2022, Hu_Li_Yuan_Cheng_Yuan_Zhu_2022, Jun-hua_Ye-xin_Chuan-lun_Jun-yang_Xing-yu_Zhi-song_2022, Lee_Kim_2023, Li_Cheng_Hsieh_Lee_2022, Liang_Wu_Hua_Zhang_Xue_Song_Xue_Ma_Guan_2023, Liang_He_Zhao_Jia_Li_2022, Lin_Hsu_Chen_Yu_2022, Liu_Zhang_Mo_Xiang_Li_Cheng_Gao_Liu_Chen_Wei_2022, Luo_2022_CVPR, midtlidMASSA, Namiot_Ilyushin_2022, Pang_Zhang_He_Dong_Su_Chen_Zhu_Liu_2022, Pawelczyk_Agarwal_Joshi_Upadhyay_Lakkaraju_2022, Pinot_Meunier_Yger_Gouy-Pailler_Chevaleyre_Atif_2022,  qi2022, Qian_Huang_Wang_Zhang_2022, Ruiz_Kortylewski_Qiu_Xie_Bargal_Yuille_Sclaroff_2022, Sajeeda_Hossain_2022, Singh_Awasthi_Urvashi_2022, Tuna_Catak_Eskil_2022, Vos_Verwer_2022, Sheatsley_Hoak_Pauley_McDaniel_2022, Gubri_Cordy_Traon_2023, Li_Yu_Huang_2023, 10219877, wang_2022_vit, pmlr-v162-yamamura22a, long2022frequency, chen2022shape, chen2023diffusion, chen2023contentbased, dai2023advdiff, liu2023diffprotect, lin2023diffusion, diffpure, bose2020adversarial, Wang_2021_ICCV, aigan2021}
.
This formalization allows us to define and categorize attacks and their associated threat models, in turn, laying a foundation for defenses. Additionally, combined with our survey of attacks, we measure 
the effect of the information available to an attacker on an attack's performance. We compare various threat models and confirm, with falsifiable results, the generally held belief that certain categories of information, like information related to 
the attacked classification model, have a disproportionate effect on attack performance. However, we also find that a lack of information in that category, in the case of transferable attacks, can be compensated by the use
of additional information such as information related to the training data or the training process, to yield attacks that are almost as potent.

\section{Definitions}
We let $\mathcal{I} \subset \mathbb{R}^*$ be the set of all possible inputs,  $\mathcal{L} \subset \mathbb{R}^* \cup \{\bot\}$ the set of all possible labels, and the ground-truth function $gt: \mathcal{I} \rightarrow \mathcal{L}$ as the function that assigns its true label to an input sample.
\begin{definition}[Adversarial Example]\label{def:new_adversarial_example}
    An input sample $i \in \mathcal{I}$ and its associated label $gt(i) \in \mathcal{L}$  is said to be adversarial if it was specifically crafted to successfully trigger a learned model to output an incorrect answer.
\end{definition}
\begin{definition}[Grounded Adversarial Example]\label{def:grounded}
    An input sample $i \in \mathcal{I}$ and its associated label $gt(i) \in \mathcal{L}$ is said to be a \underline{grounded adversarial example} if it was created using a benign sample $x \in \mathcal{I}$ and its associated label $gt(x) \in \mathcal{L}$.
\end{definition}
\begin{definition}[Targeted/Untargeted Adversarial Example]\label{def:targeted_untargeted_adersarial_example}
    An input sample $i \in \mathcal{I}$ and its associated label $gt(i) \in \mathcal{L}$ is said to be a \underline{targeted adversarial example} if for a fixed chosen label $c \in \mathcal{L}, c\neq gt(i)$, it triggers a learned model to output label $c$. However, if it triggers a learned model to output a label $l \neq gt(i), l \in \mathcal{L}$, we say it is \underline{untargeted}.
\end{definition}

\section{Previous Work}
We formalize adversarial example attacks with an emphasis on the information and the capabilities the adversary has access to perform their attack. There has been a lack of a proper theoretical framework that is based on proper security principles. In turn, this has been reflected in work whose impact is limited due to the inherent flaws in their security assumptions. 

Previous systematization of knowledge works \cite{papernotsok, carlini_and_wagner2017} have looked at the adversarial example problem from various angles. Papernot et al. \cite{papernotsok} took a holistic approach and proposed a thorough overview of the possible 
threat models as well as the training and inference process in an adversarial setting. They emphasized a clear distinction between the different domains (physical vs. digital) while describing the overall ML attack surface. 
This was reiterated more recently by Lin et al. \cite{Lin_Hsu_Chen_Yu_2022} where the authors present an attack on the physical domain specifically. In contrast, Byun et al. \cite{Byun_2022_CVPR} took advantage of the properties of the physical domain by projecting adversarial examples onto 
virtually 3D-rendered physical objects (like a mug) to improve the transferability of their attack. Another approach was the one by Carlini \& Wagner \cite{carlini_and_wagner2017}. In their paper, they present generating adversarial examples as a constrained optimization problem. 
They first cover a set of existing attacks at the time and from their formalization, they introduced three new attacks adapted each for a specific norm distance metric ($L_0$-norm, $L_1$-norm, and $L_2$-norm).
However, we found that these works fail to properly cover adversarial information and capabilities. This was not a significant problem back in 2017 and 2018 when most of the literature focused on white-box attacks,
attacks where the adversary has immense knowledge and capabilities and therefore covering the minutia is not as important. \\

Recently, however, the field has experienced a shift toward more realistic and usable attacks. This is reflected in the emergence of unrestricted adversarial examples \cite{chen2023contentbased, chen2021unrestricted, chen2023diffusion} as well as threat models 
like the black-box and no-box (transferable) threat models \cite{zhang2022imagenet, Byun_2022_CVPR, Gubri_Cordy_Papadakis_Traon_Sen_2022, long2022frequency, Wang_2021_ICCV, wang_2022_vit}.
A work that took the first step toward rationalizing this shift is the work of Gilmer et al. \cite{gilmer2018motivating}. One of the core messages of their work is that, at the time,
research on adversarial examples was predominantly focused on abstract, hypothetical scenarios that lacked direct relevance to any specific security issues. This still partially holds up to this day. They did mention the question 
of the attacker's knowledge but did not go into depth beyond the already common concepts of white-box and black-box.  Papers on defense techniques have also not provided a comprehensive account of attackers' capabilities 
and constraints that would be applicable in real-world security scenarios. However, they presented a wide-reaching list of salient situations that represent at their core the action space of the attacker. The action space of an attacker can be defined as 
the set of potential actions or strategies at the disposal of an adversary. This encompasses the various methods and choices available to the attacker when attempting to compromise a system, exploit vulnerabilities, or achieve their malicious goals during an attack. \\

In this work, we propose to delve deeper into the knowledge that an attacker has when performing an attack. This goes beyond the knowledge of the machine learning system that they are trying to attack. It includes 
access to data and computing resources, as well as knowledge of the code and hyperparameters used to set up and train said machine learning system.

\section{Formalization}
\raggedbottom

A key element to differentiate between inference time attacks is the action space that is available to the attacker. Real-world practical restrictions are translated into mathematical constraints on the generated adversarial examples.
Gilmer et al. \cite{gilmer2018motivating} first define the action space of the adversary for adversarial example attacks. In their paper, they identify five salient situations: indistinguishable perturbations, content-preserving perturbations, non-suspicious input, content-constrained input and unconstrained input. Historically, adversarial example attacks have been studied under the indistinguishable and content-preserving perturbation settings, but recently have been extended to the non-suspicious and content-constrained input settings.  

To propose a unified theoretical framework that can accommodate every salient situation, we introduce an object that is already well-defined in the field of cryptography: the distinguisher.

Our fundamental challenge is rooted in simulating human-like image perception, a problem lacking a well-established mathematical framework. 
Consequently, we must define our own setting, complete with its corresponding distinguisher.
The context we are operating within assumes human-like capabilities for the distinguisher, for example, for indistinguishable perturbations, it is so human reviewers would not be able to distinguish that the image is adversarial. Therefore, we need to introduce a new class of distinguishers, the \textbf{human-like distinguisher} that we define as \underline{a distinguisher whose capabilities match that of a human or }\\
\underline{any human-made real-world system}. Current existing metrics ($l_p$-norm, similarity, quality...) all fall under this category and are used as an approximation of human-like capabilities for their salient situation.

We can now represent a salient situation as a distinguisher $D: \mathcal{I} \rightarrow \{0,1\}$ who attempts to output 0 on benign samples and 1 on adversarial samples, it is meant to capture the detection capabilities in place in the system under attack.
We can use this formalization of salient situations and provide a proper formalization of the notion of \textit{indistinguishability} for any setting, not just the indistinguishable perturbation setting.

\begin{definition}[Indistinguishability]\label{def:indistinguishability}
    Assuming a salient situation with distinguisher $D: \mathcal{I} \rightarrow \{0,1\}$, let $A, B \subset \mathcal{I}$ where $A$ is the set of all adversarial examples and $B$ is the set of all benign samples, we say an input $x \in \mathcal{I}$ 
    is \underline{indistinguishable} if we have:\\
    \begin{gather}
        \zeta(n) = |\underset{x \leftarrow A}{\textnormal{Pr}}[D(x) = 1] - \underset{x \leftarrow B}{\textnormal{Pr}}[D(x) = 1]| = 0
    \end{gather} 
\end{definition}

However, this definition is too restrictive as in practice $n$ (and therefore the number of possible adversarial examples) is too large to obtain $\zeta(n) = 0$ and should only serve as a target for the community to strive for
rather than a strict requirement. Instead, for practical purposes, we can use a loosened notion of \emph{stealth} to bind adversarial examples.

\begin{definition}[Stealth]\label{def:stealth}
    If $\zeta(n)$ (from Equation \ref{def:indistinguishability}) is $\zeta(n) \in \mathcal{O}(\frac{log(n)}{n})$, $|A \cup B| = n$, then we say that the associated input $x \in \mathcal{I}$ is \underline{stealthy}.
\end{definition}

\subsection{Categorization}
\subsubsection{Information Extraction Oracles}

To formalize the information an adversary has access to when performing an attack, we need a generic representation of information. Since precisely quantifying and defining information is quite difficult,
we avoid it and instead present a generic method for representing information acquired by the adversary. To that effect, we introduce Information Extraction Oracles (IEOs).
IEOs serve as a translation tool that allows an attacker to properly defined information in a threat model. They can be used to represent the access to information available to the attacker (through any of its attack capabilities). We now introduce an improved method for presenting threat models in a standardized and formalized fashion.

\begin{definition}[Information Extraction Oracle]\label{def:ieo}
    An Information Extraction Oracle $\mathcal{O}: \{0, 1\}^* \rightarrow \{0, 1\}^*$ is a stateful oracle machine that can take in a query and outputs information in a binary format.
\end{definition}

We define the set of all information extraction oracles as $\mathbb{O}$.
Instead of describing attacker capabilities with vague terminology, we can now use information extraction oracles to precisely and accurately capture the attacker's knowledge and capabilities.
As mentioned in Definition \ref{def:ieo}, information extraction oracles are stateful. The adversary can extract the oracle's state using the $State$ function.

\begin{definition} \label{def:state}
    We define the function $State: \mathbb{O} \rightarrow \mathbb{R}^*$ that takes in as input an Information Extraction Oracle and outputs its current state as a set of real numbers.
    If the input oracle does not have a state, it returns $\emptyset$.
\end{definition}

To accurately represent relationships between various threat models (white-box vs. black-box for example), we need to introduce a relational structure to information extraction oracles. To do so, we define the domination $\sqsubset$ operator on IEOs.

We define the different symbols we use:
\begin{itemize}[noitemsep, topsep=0pt]
    \item $\{\}$ represents an unordered set
    \item $[]$ represents an ordered set
    \item $\cdot$ represents what we call an element, which we define as anything that is not a set.
\end{itemize}

\begin{definition}[Information Extraction Oracle domination operator]\label{def:sqsubset}
    We define the operator $\sqsubset$ for information extraction oracles over their outputs in the following way: 

    Let $\mathcal{O}_1(a)=x$ and $\mathcal{O}_2(a)=y$. We have three base cases:
    \begin{enumerate}[noitemsep, topsep=0pt]
        \item 
        \begin{enumerate}[noitemsep, topsep=0pt]
            \item $x$ and $y$ are both sets. Then $\mathcal{O}_2 \sqsubset \mathcal{O}_1$ if $\forall a \in \mathcal{I}$, $x \subset y$.
            \item $x$ is an element and $y$ is an unordered set. Then $\mathcal{O}_2 \sqsubset \mathcal{O}_1$ if $\forall a \in \mathcal{I}$, $x \in y$.
            \item $x$ is either an unordered set or an element and $y$ is an element. Then $\mathcal{O}_2 \sqsubset \mathcal{O}_1$ if there exists a probabilistic polynomial-time (PPT) function $f$ s.t. $\forall a \in \mathcal{I}$, $f(x) = y$.
        \end{enumerate}
    Using these base cases, we can expand to the three ordered set cases.
    \item
    \begin{enumerate}[noitemsep, topsep=0pt]
        \item $x=[x_1, \dots, x_n]$ is an ordered set and $y$ is either an unordered set or an element ($n$ is a positive integer). Then $\mathcal{O}_2 \sqsubset \mathcal{O}_1$ if $(y \sqsubset x_1) \vee  \dots \vee (y \sqsubset x_n)$.
        \item $x$ is an unordered set or an element and $y=[y_1, \dots, y_k]$ is an ordered set ($k$ is a positive integer). Then $\mathcal{O}_2 \sqsubset \mathcal{O}_1$ if $(y_1 \sqsubset x) \wedge  \dots \wedge (y_k \sqsubset x)$.
        \item $x=[x_1, \dots, x_n]$ is an ordered set and $y=[y_1, \dots, y_k]$ is an ordered set ($n$ and $k$ are positive integers). Then $\mathcal{O}_2 \sqsubset \mathcal{O}_1$ if $\forall i \in \{1, \dots, k\}, \exists j \in \{1, \dots, n\}$ s.t. $y_i \sqsubset x_j$.
    \end{enumerate}
\end{enumerate}
\end{definition} 

We let $\not\sqsubset$ be the not operator for $\sqsubset$. Meaning when not $A \sqsubset B$, then $A \not\sqsubset B$. 

When an attacker has access to multiple oracles $\mathcal{O}_A, \mathcal{O}_B$ where a domination relation cannot be established, we describe the resulting combined oracle in Definition \ref{def:ieo_combination}. 
\begin{definition}[Information Extraction Oracle Combination]\label{def:ieo_combination}
    Given two IEOs $\mathcal{O}_A, \mathcal{O}_B$ for which neither $\mathcal{O}_A \sqsubset \mathcal{O}_B$ nor $\mathcal{O}_B \sqsubset \mathcal{O}_A$ is true. Then we define the combined oracle $\mathcal{O}_{A\&B}$ as follows:
    $\mathcal{O}_{A\&B}(x) = [\mathcal{O}_A(x), \mathcal{O}_B(x)]$
\end{definition}

\subsubsection{Information Categories}

An IEO provides an interface for the attacker to both the knowledge of the defender (for example, information about the target model) and any additional external knowledge possessed by the attacker 
(for example, an additional dataset that is disjoint from the defender's training set but is drawn from the same distribution). 
We can then categorize the different aspects of a threat model into different IEOs that allow the attacker access to the associated information. We used all the threat models
we observed in the literature to build this categorization to have a complete representation of the threat models in the field and a meaningful categorization.

We can classify the information involved in an adversarial example attack into three distinct types: information \textbf{used} by the defender, \textbf{generated} by the defender (for example model parameters $\theta_M$) and any information that is \textbf{publicly accessible} (data, pre-trained models \dots).

Within those classes, we identified the salient categories that combined can be used to reconstruct any threat model: \textbf{model}, \textbf{data}, \textbf{training} and \textbf{defense} information.

\subsubsection{Information Hasse Diagrams}

Now that we have defined broad information categories for attackers, we can introduce a structure that will allow for meaningful comparisons between sets of assumptions within each category. 

We make use of the $\sqsubset$ operator that we defined in Definition \ref{def:sqsubset} to build Hasse diagrams. These diagrams are usually used to visually represent sets ordered by inclusion. We design our Hasse Diagrams to visually represent our oracles ordered by $\sqsubset$.
For each category, we provide the associated Hasse diagram. In order theory, a Hasse diagram is supposed to represent finite partially ordered sets, however, we extend it to infinite partially ordered sets by allowing formulaic set descriptions (which can expand to generate infinite but partially ordered sets).
For simplicity's sake, when we describe information extraction oracles, we assume that the inputs they are queried with are in $\mathcal{I}$ (in the image domain), otherwise, they return an empty ordered set $[]$.  We summarize all the behavior of the information extraction oracles we define in Table \ref{tab:oracles}.
\paragraph{Model information Hasse diagram: \newline} 
\quad For model-related information, it can either be \textit{static} or \textit{query-based}. 
For static information, we have:
\begin{itemize}[noitemsep, topsep=0pt]
    \item \underline{Model parameters} $\theta_M$ (white-box). 
    Let $\mathcal{O}_M$ be its information extraction oracle. 
    \item Model architecture: either the exact model architecture or a set of possible architectures.
    \begin{itemize}[noitemsep, topsep=0pt]
        \item \underline{Exact model architecture} $\phi_M$.
        Let $\mathcal{O}_A$ be its information extraction oracle.
        \item \underline{Set of possible architectures}: Let $\mathcal{O}_{SPA}$ be its information extraction oracle.
    \end{itemize}
\end{itemize}
For query-based information, we have:
\begin{itemize}[noitemsep, topsep=0pt]
    \item Query-access to \underline{model scores} (model logit probabilities or values on queried inputs).
    Let $\mathcal{O}_S$ be its information extraction oracle. 
    \item Query-access to \underline{model labels} (predicted class on queried inputs). Let $\mathcal{O}_L$ be its information extraction oracle.
\end{itemize}

\begin{table}[h]
\centering
\resizebox{\columnwidth}{!}{
\begin{tabular}{ccc}
Oracle & Output & State \\ \toprule
$\mathcal{O}_M$      &   $[\theta_M, x]$    & $[]$    \\
$\mathcal{O}_S$ & $[M(x), x]$ & \begin{tabular}[c]{@{}c@{}}number of queries \\left $[k]$\end{tabular} \\
$\mathcal{O}_L$ & $[argmax(M(x)), x]$ & \begin{tabular}[c]{@{}c@{}}number of queries \\left $[k]$\end{tabular} \\
$\mathcal{O}_A$  & $[\phi_M, x]$ & $[]$ \\
$\mathcal{O}_{SPA}$ & \begin{tabular}[c]{@{}c@{}}$[\{\phi_0, \phi_1, \dots, \phi_k\}, x]$, \\$k \in \mathbb{Z}^+$ and for some \\$i \sim \mathcal{U}(0, k)$, $\phi_i = \phi_M$\end{tabular} & $[]$ \\
\midrule
$\mathcal{O}_{\mathcal{D}}$ & $[\mathcal{D}, x]$ & $[]$ \\
$\mathcal{O}_{\mathcal{D'}}$ & $[\mathcal{D'}, x]$ & $[]$ \\
$\mathcal{O}_{\mathcal{E}}$ & $[\mathcal{E}, x]$ & $[]$ \\
\midrule
$\mathcal{O}_{Train}$ & $[Train, x]$ & $[]$ \\
$\mathcal{O}_{T_i}$ & $[T_i, x]$ & $[]$ \\
\midrule
 $\mathcal{O}_{FA}$ & $[\rho, \varrho, x]$ & $[]$ \\
 $\mathcal{O}_{PA}$ & \begin{tabular}[c]{@{}c@{}}$[\rho, \{\varrho_1, \dots, \varrho_k\}, x]$, \\$k \in \mathbb{Z}^+$ and for some \\$i \sim \mathcal{U}(0, k)$, $\varrho_i = \varrho$ \end{tabular}& $[]$ \\ 
 $\mathcal{O}_{SPD}$ & \begin{tabular}[c]{@{}c@{}}$[\{\rho_0, \dots, \rho_k\}, \{\varrho_1, \dots, \varrho_k\}, x]$ \\ $k \in \mathbb{Z}^+$ and for some \\$i \sim \mathcal{U}(0, k)$, $\rho_i = \rho$ and $\varrho_i = \rho$\end{tabular} & $[]$ \\
\bottomrule
\end{tabular}
}
\caption{Oracle Table}
\label{tab:oracles}
\end{table}
We can then order their associated information extraction oracles in a Hasse Diagram in Figure \ref{fig:model_hasse_diagram}.

\begin{figure}[h]
    \centering
    \includegraphics[width=6.2cm, height=8.2cm]{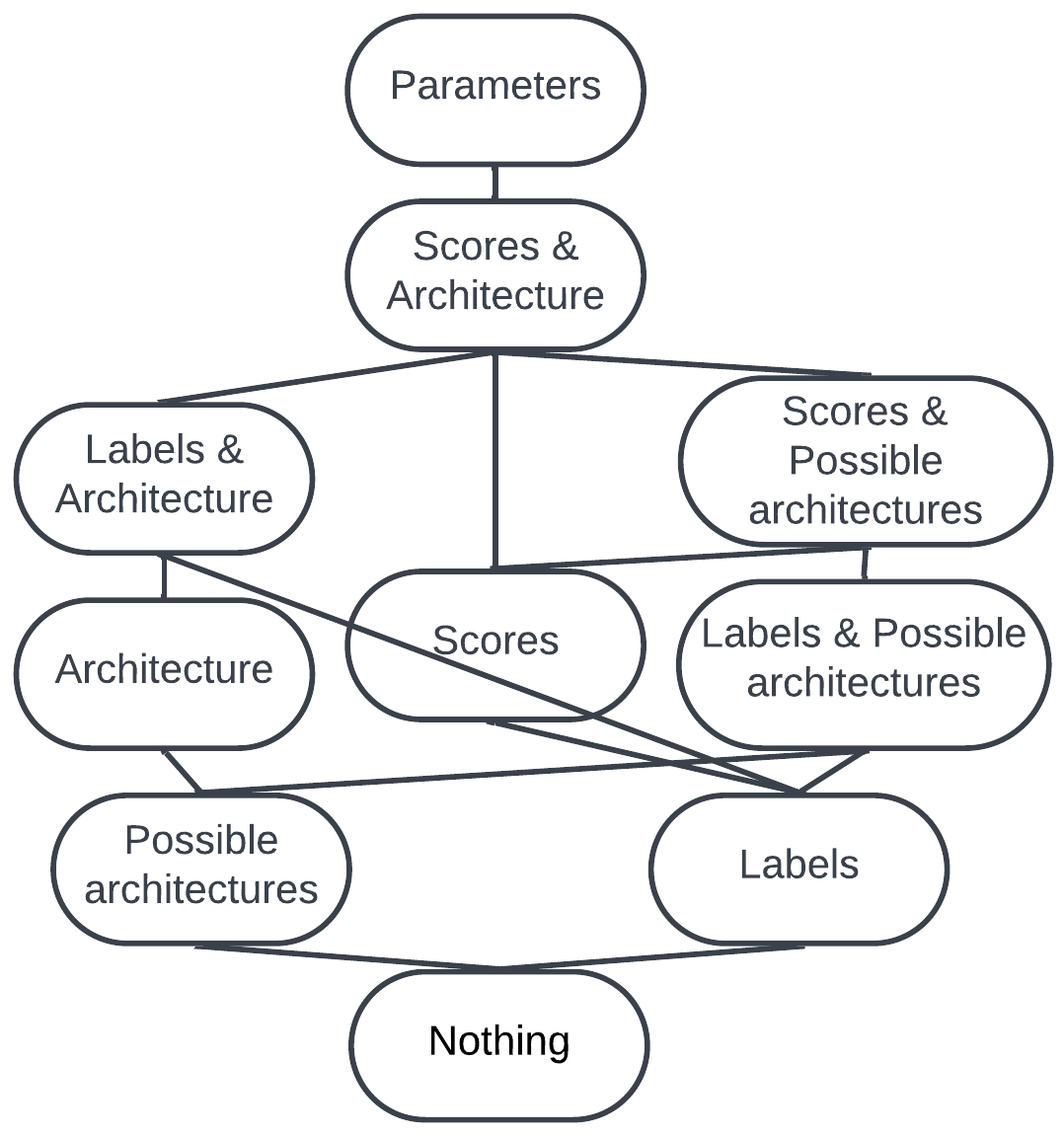}    
    \caption{Model Oracle Hasse Diagram}\label{fig:model_hasse_diagram}
\end{figure}

\begin{theorem}\label{theorem:model_hasse_diagram}
    Figure \ref{fig:model_hasse_diagram} holds under the $\sqsubset$ ordering.
\end{theorem}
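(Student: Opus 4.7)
The plan is to verify the diagram edge-by-edge by matching each covering relation with the appropriate base case of Definition \ref{def:sqsubset}, and then to argue briefly that no additional covering edges are missing so the figure is indeed a Hasse diagram. Every model oracle in Table \ref{tab:oracles} returns an ordered pair whose last coordinate is the query $x$, so every pairwise comparison reduces via case 2(c) of Definition \ref{def:sqsubset} to a comparison on the first coordinates, with the trailing $x$ slot matched trivially by the identity PPT function in case 1(c).

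For the four covering relations I would argue as follows. First, $\mathcal{O}_L \sqsubset \mathcal{O}_S$ by case 1(c): $\mathrm{argmax}$ of a score vector is computable in linear time in the label count, so the map $f(M(x)) = \mathrm{argmax}(M(x))$ is PPT. Second, $\mathcal{O}_S \sqsubset \mathcal{O}_M$ by case 1(c): the standard forward pass of $M$ on input $x$ using the parameter vector $\theta_M$ is PPT in the size of the architecture encoded in $\theta_M$. Third, $\mathcal{O}_A \sqsubset \mathcal{O}_M$ by case 1(c): a parser that reads layer shapes and activation conventions directly from the serialized parameter tensor produces $\phi_M$ in linear time. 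Fourth, $\mathcal{O}_{SPA} \sqsubset \mathcal{O}_A$ by case 1(b), since the construction of $\mathcal{O}_{SPA}$ in Table \ref{tab:oracles} guarantees $\phi_M \in \{\phi_0, \dots, \phi_k\}$. Transitivity of $\sqsubset$ then yields $\mathcal{O}_L \sqsubset \mathcal{O}_M$ and $\mathcal{O}_{SPA} \sqsubset \mathcal{O}_M$, which the diagram correctly depicts as non-covering.

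To confirm the diagram contains no missing edges, I would argue that the remaining pairs are genuinely incomparable: $\mathcal{O}_M \not\sqsubset \mathcal{O}_S$ amounts to model extraction from a single score query, which is not achievable by a fixed PPT function, and $\mathcal{O}_A$ is incomparable with $\mathcal{O}_S$ because architecture alone cannot produce score values without parameters while a single score cannot reveal the full architecture; analogous arguments rule out the remaining unordered pairs. The main obstacle is a subtlety in case 2(c) as written: the rule matches each $y_i$ against a single $x_j$, but computing $M(x)$ from $\mathcal{O}_M$'s output inherently consumes both $\theta_M$ and $x$ jointly. I would resolve this either by appealing to Definition \ref{def:ieo_combination} at the sub-element level, or by observing that the query $x$ is already available to the PPT derivation function because it is the oracle's own query argument, so $f$ may depend on it implicitly. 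Pinning down this reading of the definition, rather than any specific calculation, is the real crux of the proof.
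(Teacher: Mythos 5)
Your edge-by-edge strategy and your four base relations ($\mathcal{O}_L \sqsubset \mathcal{O}_S$, $\mathcal{O}_{SPA} \sqsubset \mathcal{O}_A$, and the oracles dominated by $\mathcal{O}_M$) coincide with the paper's Lemmas \ref{lemma:l_subset_s}, \ref{lemma:spa_subset_a}, and \ref{lemma:s_and_a_subset_m}, and your observation about how the PPT function in case 2(c) gets access to the query $x$ is a real subtlety the paper glosses over. However, there are three concrete gaps. First, the figure you are verifying contains more nodes than you assume: it includes the combined oracles $\mathcal{O}_{S\&A}$, $\mathcal{O}_{L\&A}$, $\mathcal{O}_{S\&SPA}$, and $\mathcal{O}_{L\&SPA}$ built via Definition \ref{def:ieo_combination}, and the bulk of the paper's proof consists of ordering these (e.g.\ $\mathcal{O}_{L\&A} \sqsubset \mathcal{O}_{S\&A}$ and $\mathcal{O}_{S\&SPA} \sqsubset \mathcal{O}_{S\&A}$ via part 2(c) applied componentwise to the base lemmas, and $\mathcal{O}_{S\&A} \sqsubset \mathcal{O}_M$ as the top covering relation rather than separate edges $\mathcal{O}_S \sqsubset \mathcal{O}_M$ and $\mathcal{O}_A \sqsubset \mathcal{O}_M$). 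Your proof never touches these nodes, so most of the diagram's edges go unverified.

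Second, the non-domination directions are asserted rather than proved. The paper supplies a specific separating construction for each: for $\mathcal{O}_M \not\sqsubset \mathcal{O}_{S\&A}$ it argues that a non-linear $M$ admits infinitely many parameter vectors consistent with any finite set of score queries; for $\mathcal{O}_S \not\sqsubset \mathcal{O}_L$ it exhibits two models agreeing on all labels but differing on the score of one input, which no function of labels can distinguish; for $\mathcal{O}_A \not\sqsubset \mathcal{O}_{SPA}$ it reduces to the impossibility of identifying a uniformly random element of an unordered set. Your phrase ``not achievable by a fixed PPT function'' needs to be cashed out in one of these forms, since the definition quantifies over all PPT $f$ and a mere appeal to hardness of model extraction does not rule out existence. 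Third, $\mathcal{O}_S$ and $\mathcal{O}_L$ are stateful (they track a remaining query budget $k$), and the paper's constructions of $f$ explicitly split into an unlimited-query case and a limited-query case in which $f$ must also update $State$ consistently; your proof ignores statefulness entirely, so your claimed equalities $f(\mathcal{O}_M(a)) = \mathcal{O}_{S}(a)$ do not hold once the budget is exhausted.
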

\noindent
We refer to Appendix \ref{app:proofs} for all of the proofs.

\paragraph{Data Information Hasse Diagram}
We identify three sets of data. The \underline{training data} $\mathcal{D} \subset \mathcal{I} \times \mathcal{L}$ that was used to train the target model $M$ and its information extraction oracle $\mathcal{O}_{\mathcal{D}}$.
Any other data samples $\mathcal{D}' \subset \mathcal{I} \times \mathcal{L}$ (and it's IEO $\mathcal{O}_{\mathcal{D'}}$) that were drawn from the \underline{same distribution} as $\mathcal{D}$ but were not in the training data. 
Finally, any \underline{other data} samples $\mathcal{E} \subset \mathcal{I} \times \mathcal{L}$ (and it's IEO $\mathcal{O}_{\mathcal{E}}$) that were not drawn from the same distribution as $\mathcal{D}$ and $\mathcal{D'}$.
In this case, creating the Hasse diagram is relatively straightforward because we only need to contemplate all feasible power set combinations of oracles using definition \ref{def:ieo_combination}. 

\begin{figure}[h]
    \centering
    \includegraphics[width=5.5cm, height=5.5cm]{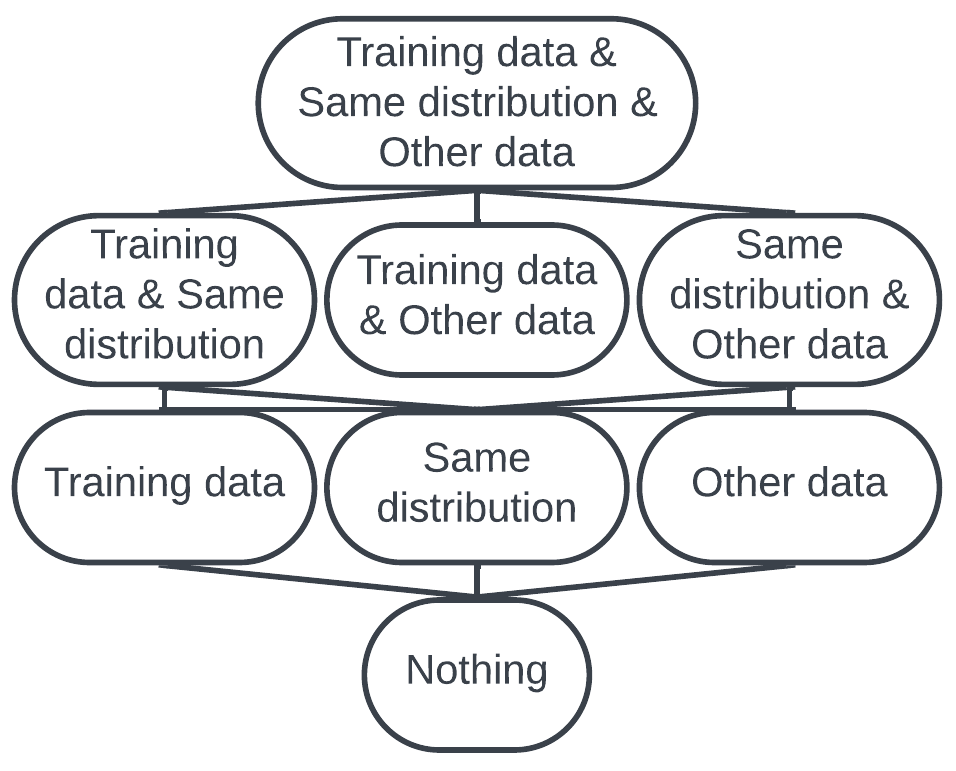}  
    \caption{Data Oracle Hasse Diagram}\label{fig:data_hasse_diagram}
\end{figure}

\begin{theorem}\label{theorem:data_hasse_diagram}
    Figure \ref{fig:data_hasse_diagram} holds under the $\sqsubset$ ordering.
\end{theorem}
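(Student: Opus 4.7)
The plan is to decompose the proof into two parts: verifying that all edges (dominations) shown in Figure \ref{fig:data_hasse_diagram} hold, and verifying that no additional edges exist between nodes that the diagram leaves incomparable. Given the three base oracles $\mathcal{O}_{\mathcal{D}}, \mathcal{O}_{\mathcal{D'}}, \mathcal{O}_{\mathcal{E}}$ together with Definition \ref{def:ieo_combination}, the vertex set of the diagram is the seven nonempty subsets of these oracles, arranged as a Boolean lattice with the combined oracle over all three at the top and the three singletons at the bottom.

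For the positive direction (establishing the edges drawn in the figure), I would rely on case 2(c) of Definition \ref{def:sqsubset} applied to ordered-set outputs. For example, to show $\mathcal{O}_{\mathcal{D}} \sqsubset \mathcal{O}_{\mathcal{D}\&\mathcal{D'}}$, I would unfold $\mathcal{O}_{\mathcal{D}\&\mathcal{D'}}(x) = [[\mathcal{D}, x], [\mathcal{D'}, x]]$ via Definition \ref{def:ieo_combination}, then match each element of $[\mathcal{D}, x]$ on the left to the sub-ordered-set $[\mathcal{D}, x]$ on the right via case 2(a) using the identity PPT function from case 1(c). The same template handles every edge $\mathcal{O}_X \sqsubset \mathcal{O}_Y$ whenever $X \subseteq Y$ as subsets of $\{\mathcal{D},\mathcal{D'},\mathcal{E}\}$, so a single generic lemma covers the entire positive part of the diagram.

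For the negative direction (establishing incomparability between siblings in the lattice), I would argue that no PPT function can convert one data oracle's output into another's, which by case 1(c) blocks the needed domination. The core observation is that $\mathcal{D}, \mathcal{D'}, \mathcal{E}$ are distinct labeled sample sets: $\mathcal{D}$ and $\mathcal{D'}$ are disjoint draws from the same distribution, and $\mathcal{E}$ is drawn from a different distribution. Therefore given $[\mathcal{D'}, x]$ no PPT procedure can reconstruct the specific disjoint samples in $\mathcal{D}$, and symmetric arguments rule out the other pairwise PPT transformations. Lifting this singleton incomparability to combinations requires one extra step: if a combined oracle $\mathcal{O}_{X}$ were to dominate another combined oracle $\mathcal{O}_{Y}$ with $Y \not\subseteq X$, then by peeling back the ordered-set case 2(c) we would recover a PPT transformation between two sibling singletons, contradicting the above.

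The main obstacle will be the non-domination arguments, because $\sqsubset$ is defined in terms of existence of a PPT function and ruling out such a function is an inherently assumption-laden step. I would state this as an explicit distinctness assumption on the three data sources, analogous to cryptographic hardness assumptions, and justify it by the fact that predicting specific held-out samples or samples from a shifted distribution is not a task efficient algorithms can solve from unrelated training data. Once that assumption is in place, the remaining bookkeeping — enumerating the seven nodes, verifying each claimed edge by the generic template above, and ruling out every missing edge by reduction to singleton incomparability — reduces to routine case analysis.
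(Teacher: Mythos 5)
Your proposal is correct in substance, but it is considerably more detailed than what the paper actually does: the paper's entire proof of this theorem is the single sentence that it ``follows directly from definitions \ref{def:sqsubset} and \ref{def:ieo_combination}.'' Your positive direction (every edge $\mathcal{O}_X \sqsubset \mathcal{O}_Y$ for $X \subseteq Y$, established by unfolding Definition \ref{def:ieo_combination} and walking down through cases 2(c), 2(a), and 1(c) of Definition \ref{def:sqsubset}) is exactly the argument the paper leaves implicit, and it mirrors how the paper handles the analogous edges in its proof of Theorem \ref{theorem:model_hasse_diagram} (there it cites parts 2(a) and 2(b) for the combined-oracle edges). Where you genuinely diverge is the negative direction: you explicitly argue that siblings in the lattice are incomparable by ruling out a PPT transformation between distinct data sources, you reduce incomparability of combinations to incomparability of singletons by peeling back case 2(c), and you are candid that this step rests on an explicit distinctness assumption analogous to a cryptographic hardness assumption. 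The paper never addresses non-domination for the data diagram at all, even though its proofs for the training and defense diagrams do make the corresponding ``cannot guess a random element'' arguments. So your proof is not wrong anywhere; it is strictly more complete than the paper's, and the extra care you take in isolating the assumption needed for the incomparability claims is a real improvement rather than a deviation. The one thing to watch is that Definition \ref{def:sqsubset} compares oracles only over their outputs on a common query $a$, so your singleton non-domination argument should be phrased as the nonexistence of a single PPT $f$ working for all $a \in \mathcal{I}$, which your distribution-level distinctness assumption does support.
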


\paragraph{Training Information Hasse Diagram}\label{train_info_hasse_diagram}

We describe a systematized method for identifying a training function's known elements. First, we must define exactly what we mean by training algorithm or function.

\begin{definition}[Train function]\label{def:train}
    We define a training function $Train: \{0,1\}^* \rightarrow \mathcal{M}$ as the function representing the algorithm used by the defender to
    generate the target model $M$.
\end{definition}
The $Train$ function takes in the algorithm's hyper-parameters as well as any other information needed to train the model. 
We can articulate known information about the training algorithm as one of the following \underline{training information sets}:\\ $T = \{Train' | g_{Train}(Train') = True\}$ where $g_{Train}$ is a function that returns $True$ if a particular condition in the training algorithm with respect to the original $Train$ function is satisfied (e.g. the training algorithm uses cross-entropy loss) and $False$ otherwise. We can then create two types of information extraction oracles. This is the information that can be used (combined with data) to train/assume access to pre-trained models. The first one is $\mathcal{O}_{Train}$, where $Train$ is the \underline{original $Train$ function} used by the defender. 

 The other is when the attacker has access to one of the sets $T$. There are infinitely many of them, therefore 
we will just define a generic oracle for them: $\mathcal{O}_{T_i}$ for any integer $i$ where $T_i$ is any one of the sets $T_i = \{Train' | g^i_{Train}(Train') =True\}$ we previously described. 

The oracles generated from this generic oracle can then be ordered using $\sqsubset$ following the usual rules defined in 
Definition \ref{def:sqsubset}.

\begin{figure}[h]
    \centering
    \includegraphics[width=2.7cm, height=4.1cm]{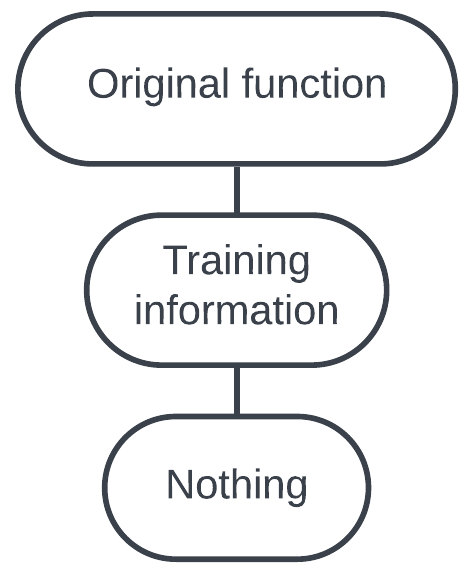}  
    \caption{Train Oracle Hasse Diagram}\label{fig:train_hasse_diagram}
\end{figure}

\begin{theorem}\label{theorem:train_hasse_diagram}
    Figure \ref{fig:train_hasse_diagram} holds under the $\sqsubset$ ordering.
\end{theorem}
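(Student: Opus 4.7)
The plan is to verify each edge of Figure~\ref{fig:train_hasse_diagram} by appealing to the base cases of Definition~\ref{def:sqsubset} applied to the oracle outputs listed in Table~\ref{tab:oracles}. From the table, $\mathcal{O}_{Train}$ returns the ordered pair $[Train, x]$, whereas a generic property-based oracle $\mathcal{O}_{T_i}$ returns $[T_i, x]$ with $T_i = \{Train' \mid g^i_{Train}(Train') = True\}$. Because both outputs are ordered pairs of length two and share the trailing query $x$, every domination claim reduces via case~2c to a component-wise check in which the trailing $x$ is handled by the identity PPT function through base case~1c. The substantive work therefore concerns only the first components.

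For the top edge of the diagram, I will show that $\mathcal{O}_{T_i} \sqsubset \mathcal{O}_{Train}$ for every index $i$. By construction of the sets $T_i$, the defender's function $Train$ satisfies its own defining condition $g^i_{Train}(Train) = True$, so $Train \in T_i$ uniformly over all queries $a \in \mathcal{I}$. Since $Train$ is an element and $T_i$ is an unordered set containing it, base case~1b delivers $T_i \sqsubset Train$; combined with the trivial match on the second coordinate, case~2c yields $\mathcal{O}_{T_i} \sqsubset \mathcal{O}_{Train}$. For any lower edges between two property-based oracles $\mathcal{O}_{T_i}$ and $\mathcal{O}_{T_j}$ with $T_i \subseteq T_j$, the first components are both unordered sets and base case~1a applies, giving $T_j \sqsubset T_i$ and hence $\mathcal{O}_{T_j} \sqsubset \mathcal{O}_{T_i}$. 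This matches the intended reading of the diagram: a narrower candidate pool of training functions (more conditions being enforced) conveys strictly more information.

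The principal obstacle is justifying the membership step $Train \in T_i$ when $g^i_{Train}$ is framed relative to the original $Train$. I will handle this by reading the phrase ``a particular condition in the training algorithm with respect to the original $Train$ function is satisfied'' literally: the defining predicate is set up precisely so that the actual defender's algorithm obeys it, hence membership is automatic and holds for every query $a$. A secondary subtlety worth addressing is that $T_i$ can be an infinite set of functions, so the reverse relation cannot be recovered by attempting case~1c in the opposite direction, since no PPT procedure can in general single out the specific $Train$ from a bare description of $T_i$. This confirms that the edges of Figure~\ref{fig:train_hasse_diagram} are oriented correctly and not spuriously reversible under~$\sqsubset$.
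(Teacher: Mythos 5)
Your proposal is correct and follows essentially the same route as the paper's own proof: establish $Train \in T_i$ from the definition of $g^i_{Train}$, apply base case 1(b) of Definition~\ref{def:sqsubset} to get $\mathcal{O}_{T_i} \sqsubset \mathcal{O}_{Train}$, and rule out the reverse direction by arguing that no PPT function can single out $Train$ from the unordered set $T_i$. The only differences are cosmetic: you explicitly route the ordered-pair outputs $[T_i, x]$ and $[Train, x]$ through case 2(c) and spell out the 1(a) ordering among the $\mathcal{O}_{T_i}$ themselves, both of which the paper leaves implicit.
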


\paragraph{Defense Information Hasse Diagram}

Defense information is any defense mechanisms put in place by the defender on top of the distinguisher $D$. We can partition it as follows:

\begin{itemize}[noitemsep, topsep=0pt]
    \item \underline{Full awareness} of the defense and its parameters (similar to white-box for model information). We represent this as an algorithm $\rho$ and its parameters $\varrho$. Let $\mathcal{O}_{FA}$ be its information extraction oracle.
    \item \underline{Partial awareness} of the defense and its parameters: knowledge of the defense $\rho$ but not the specific parameters of the instance. Let $\mathcal{O}_{PA}$ be its information extraction oracle. 
    \item \underline{A set of potential defenses} that can be obtained by having partial insider information. Let $\mathcal{O}_{SPD}$ be its information extraction oracle.
\end{itemize}

\begin{figure}[h]
    \centering
    
    \includegraphics[width=1.84cm, height=5.5cm]{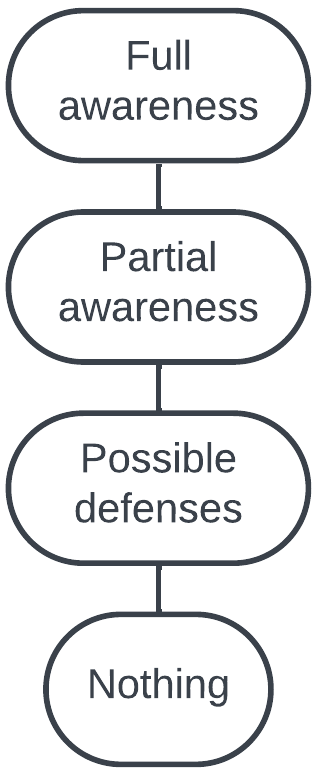}
    \caption{Defense Oracle Hasse Diagram}\label{fig:defense_hasse_diagram}
\end{figure}

\begin{theorem}\label{theorem:defense_hasse_diagram}
    Figure \ref{fig:defense_hasse_diagram} holds under the $\sqsubset$ ordering.
\end{theorem}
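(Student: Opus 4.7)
My plan is to verify the diagram as a chain $\mathcal{O}_{SPD} \sqsubset \mathcal{O}_{PA} \sqsubset \mathcal{O}_{FA}$ by establishing the two cover relations directly from Definition \ref{def:sqsubset}; the remaining comparison follows by transitivity of $\sqsubset$, which is routine to verify by combining the base cases. Since each oracle in Table \ref{tab:oracles} outputs an ordered triple, the outer shell of every verification is case 2(c), and the work reduces to matching corresponding components.

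For $\mathcal{O}_{PA} \sqsubset \mathcal{O}_{FA}$, the components of $\mathcal{O}_{FA}(a) = [\rho, \varrho, a]$ and $\mathcal{O}_{PA}(a) = [\rho, \{\varrho_1, \dots, \varrho_k\}, a]$ align naturally: the $\rho$-entries and the input $a$ match themselves via case 1(c) with the identity PPT function, and the central pair $(\varrho, \{\varrho_1,\dots,\varrho_k\})$ falls exactly under case 1(b) because $\varrho \in \{\varrho_1,\dots,\varrho_k\}$ holds by the definition of partial awareness in Table \ref{tab:oracles}. For $\mathcal{O}_{SPD} \sqsubset \mathcal{O}_{PA}$, the same component-wise matching applies: the defense-algorithm component uses case 1(b) since $\rho \in \{\rho_0,\dots,\rho_k\}$ by construction of $\mathcal{O}_{SPD}$, the $\varrho$-sets compare via case 1(a) with the candidate set of $\mathcal{O}_{PA}$ included in (or coinciding with) that of $\mathcal{O}_{SPD}$, and the input matches itself via case 1(c).

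The main obstacle is respecting the directionality of $\sqsubset$: because an oracle that outputs a specific element dominates one that outputs a set containing it, the more-informative oracle sits on the element side of case 1(b), which reverses the naive intuition that a larger output means more information. I also need to be explicit about the genericity of the sets in Table \ref{tab:oracles}---specifically, that the candidate parameter set in $\mathcal{O}_{SPD}$ can be taken to contain the one in $\mathcal{O}_{PA}$---so that the subset inclusion required by case 1(a) holds without ambiguity. With those bookkeeping points settled, each step is a direct application of the base cases.
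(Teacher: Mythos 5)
Your verification of the two positive relations $\mathcal{O}_{PA} \sqsubset \mathcal{O}_{FA}$ and $\mathcal{O}_{SPD} \sqsubset \mathcal{O}_{PA}$ is correct and matches the paper's argument: both are obtained from case 2(c) on the ordered triples together with case 1(b) applied to the middle component (using $\varrho \in \{\varrho_1,\dots,\varrho_k\}$ and $\rho \in \{\rho_0,\dots,\rho_k\}$ respectively), and your remarks about the directionality of case 1(b) and the inclusion needed for case 1(a) are the right bookkeeping.

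However, there is a genuine gap: you prove only half of what the theorem asserts. For the Hasse diagram to ``hold under the $\sqsubset$ ordering,'' the three oracles must occupy genuinely distinct, strictly ordered levels, which requires the non-domination claims $\mathcal{O}_{FA} \not\sqsubset \mathcal{O}_{PA}$ and $\mathcal{O}_{PA} \not\sqsubset \mathcal{O}_{SPD}$ in addition to the two dominations you establish. If, say, $\mathcal{O}_{FA} \sqsubset \mathcal{O}_{PA}$ also held, the two oracles would be equivalent under the preorder and could not be drawn as separate nodes on an edge of the diagram; this is why every edge in the paper's other Hasse-diagram theorems is proved in both directions. The paper closes this half by invoking case 1(c): a PPT function handed only the unordered candidate set $\{\varrho_1,\dots,\varrho_k\}$ (respectively $\{\rho_0,\dots,\rho_k\}$) would have to identify which element is the true $\varrho$ (respectively $\rho$), and since the index $i$ with $\varrho_i = \varrho$ is sampled uniformly at random, no such function can succeed for all inputs --- it would amount to perfectly guessing a truly random number. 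Your proof needs an argument of this kind for each of the two edges before the theorem is established.
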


\subsection{Adversarial Example Game}
Gilmer et al. \cite{gilmer2018motivating} were also among the first to present the problem of adversarial examples as a security game. This contrasts with Carlini \& Wagner \cite{carlini_and_wagner2017} and their optimization-based representation. 
This emphasized the security nature of the problem. They also address in their paper the notion of game sequence. It is the player order and whether the game is repeated. This was not particularly addressed by Carlini \& Wagner \cite{carlini_and_wagner2017} but is key to a practical understanding of the problem. It relates the problem of adversarial examples to those of standard cryptographic security games. Later, Bose et al. \cite{bose2020adversarial} expanded upon the game nature of adversarial
examples while relating it to its underlying min-max optimization nature. They use this expansion to develop an attack in what they call the NoBox (also called transferable) setting and to bridge the gap between the theory and the empirical in more
demanding and realistic threat models.
We present an updated, concise and comprehensive security game for adversarial examples
that conforms with the security and machine learning communities.

\subsubsection{Definitions}

First, we define the following: $\varnothing$ is the termination symbol. $\mathcal{O}^{a,b}_{X}$ is an input information extraction oracle about the data granted to the adversary for an attack. We superscript $\mathcal{O}^{a,b}_{X}$ with a bit $a$ that represents whether the attack is grounded or not (0 for grounded and 1 for not-grounded). We also superscript it with a bit $b \in \{0,1\}$ where if the attack is untargeted then $b = 0$ and if it is targeted then $b=1$.
It returns the start input sample $x \in \mathcal{I}$ and its associated ground-truth label $y \in \mathcal{L}$ if the attack is grounded.
In the case of a targeted attack, it also returns the target label $y_t \in \mathcal{L}$.
$\mathcal{O}_{Dist}$ is an information extraction oracle that returns a distinguisher $D'$ that the attacker can utilize to verify that the generated adversarial examples are within the restrictions of the salient situation. By default, $D'=D$ where $D$ is the actual distinguisher used by the defender. However, in situations where only partial information 
    is known about the distinguisher $D$ or the associated salient situation, $D'$ can be used to capture the attacker's knowledge of $D$.

We can now define the key functions that we will use in our security game: 

\begin{definition}[$AdvGen$]\label{def:advgen}
    $AdvGen: \mathbb{O} \rightarrow \mathcal{I} \times \mathcal{L}$. \\
    $AdvGen(O)$ on input set of information oracles $O \subset \mathbb{O}$ (including the input information oracle $\mathcal{O}^{a,b}_{X}$), outputs the following:
    \begin{gather}
        AdvGen(O) = [x', y']
    \end{gather}
    where $x' \in \mathcal{I}$ is the adversarial example and $y' \in \mathcal{L}$ is its ground-truth label.

\end{definition}

Additionally, we also define two other functions that are vital to our security game: $Evaluate$ and $Classify$. As mentioned in Gilmer et al.'s work \cite{gilmer2018motivating}, an attacker can have
different goals, therefore we use the $Evaluate$ function to represent an attack's success with respect to the attacker's goal. (Either a targeted or an untargeted (Definition \ref{def:targeted_untargeted_adersarial_example}) attack or any other goal an attacker might have). 

\begin{definition}[$Classify$]\label{def:classify}
    $Classify(M, x)$ on input model $M$ and input sample $x \in \mathcal{I}$ returns the predicted label of $x$ using the classifier $M$ while executing any inference-time pre-processing or defense mechanisms the defender has in place.    
\end{definition}

\begin{definition}[$Evaluate$]\label{def:evaluate}
    We have two evaluation functions, one for untargeted attacks $Evaluate_0$ and one for targeted attacks $Evaluate_1$ (same as $\mathcal{O}^{a,0}_{X}$ and $\mathcal{O}^{a,1}_{X}$). They are defined as the following:
    \begin{gather}
        Evaluate_0(y, r) = \mathbb{I}[r \neq y] 
    \end{gather}
    and
    \begin{gather}
        Evaluate_1(y_t, r) = \mathbb{I}[r = y_t]
    \end{gather}
    Where $y$ is the ground-truth label, $r$ is the predicted label and $y_t$ is the target label for a targeted attack.
    
\end{definition}

Now that we have all the components of the game ready, we can define it in Figure \ref{alg:game_diagram}.
This game has two participants, the attacker and the defender. 

\begin{figure}[h]
\resizebox{\columnwidth-20pt}{!}{
\begin{minipage}{\columnwidth}
\pseudocodeblock[linenumbering, space=keep]{
\textbf{ Attacker} \< \< \textbf{ Defender} \\[0.1\baselineskip ][\hline ]
(O, AdvGen, Evaluate_b) \< \< (D, Train, Classify) \\[0.1\baselineskip ][\hline ]
\<\< M \gets Train(\cdot) \\
l \gets \{\} \\
i \gets 0 \\
\mathcal{O}_{Dist} \gets O[\cdot] \\
D' \gets \mathcal{O}_{Dist} \\ 
x', d \gets \emptyset, 1 \\
x_i' \gets x' \\
\textbf{While } (d=1) \wedge (x_i' \notin l) \\
\textbf{Do }\\
x_i', y_i' \gets AdvGen(O) \\
l \gets l \cup \{x_i'\} \\
d \gets D'(x_i', y_i', \mathcal{O}^{a,b}_{X}) \\
i \gets i + 1 \\
\textbf{Done} \\
x' \gets x_i' \\
\textbf{If } d=1 \textbf{ Then Return } 0 \\
\textbf{Else }\< \sendmessageright*[1.4cm]{x'} \\
\<\< d' \gets D(x') \\
\<\< \textbf{If } d' = 1 \textbf{ Then } r \gets \varnothing \\
\<\< \textbf{Else } r \gets Classify(M, x') \\
\< \sendmessageleft*[1.4cm]{r} \\
\textbf{If } r = \varnothing \textbf{ Then Return } 0 \\
\textbf{If } b=0 \textbf{ Then}\\
\textbf{Return } Evaluate_0(y, r) \\
\textbf{Else } \\
\cdot, \cdot, y_t \gets \mathcal{O}^{a,1}_{X} \\
\textbf{Return } Evaluate_1(y_t, r)
}
\end{minipage}
}

\pseudocodeblock[space=keep]{
    \gets \text{ is the assignment operator.}= \text{ is the equality comparison}\\ \text{operator. }
    \cdot \text{ is used as a placeholder for an index.}
}
\caption{Adversarial Example Game Diagram Algorithm}\label{alg:game_diagram}
\end{figure}

\subsubsection{Measuring Success}
First, we have to define success before being able to measure it. We define success as our game returning 1 and failure as our game returning 0.
This definition guarantees that the attacker succeeds if and only if he manages to remain undetected by the defender while completing his objective (usually misclassification).
Let $G(O, D', Evaluate_b, D, Train, Classify)  \rightarrow \{0,1\}$ be an instance of the game. What is currently used to measure success is the following:
\begin{definition}[Expected success rate]\label{def:exp_suc_rate}
    We define the expected success rate (ESR) as:
    \begin{gather}
    \xi_{G} = \mathbb{E}[G(O, D', Evaluate_b, D, Train, Classify)]
    \end{gather}
\end{definition}
It is also know as Attack Success Rate (ASR). However, it can be deceiving as it rewards attacking already poorly performing models and, without additional information, can misrepresent an attack's performance.
Let $G_b(\mathcal{O}_{X}, Evaluate_b, D, Train, Classify)$ be the version of our game where instead of being given adversarial examples, the defender is provided with benign samples. We use the expected success rate on this game $\Upsilon_{G_b}$
as a lower-bound for attack performance.
For simplicity's sake, we write $G(O, D', Evaluate_b, D, Train, Classify)$ as $G$ and $G_b(\mathcal{O}_{X}, Evaluate_b, D, Train, Classify)$ as $G_b$. We instead propose the following score as an alternative performance measurement.
\begin{definition}[Relative performance score]\label{def:rps}
    We define the relative performance score as:
    \begin{gather}
    \Upsilon_{G, G_b} = \xi_{G}^2 - \xi_{G_b}^2
    \end{gather}
\end{definition}

This score is contained between $-1$ and $1$. This relative score alleviates the aforementioned problem.
The score is negative when the attack performs worse than just using benign samples and is positive otherwise. 
We believe it captures the various trade-offs when mounting attacks and defenses more fairly than the expected success rate or even a shifted expected success rate by the lower-bound ($G_b$).
Figure \ref{fig:rel_perf_score} provides a visual understanding of the behavior of this score when varying both $\xi_G$ and $\xi_{G_b}$.

\begin{figure}
    \centering   
    \resizebox{\columnwidth}{!}{
    \includegraphics[width=8cm, height=5cm]{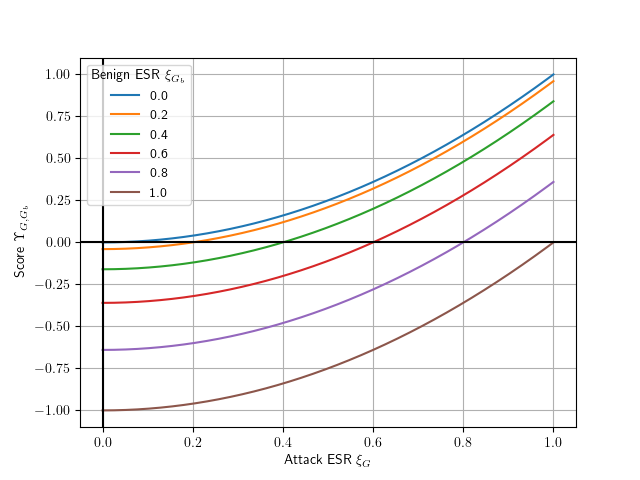}
    }
    \caption{Relative Performance Score for various benign and adversary ESRs.}\label{fig:rel_perf_score}  
\end{figure}

\subsubsection{Example: PGD}\label{subsec:showcase}
We will now showcase an application of our formalization to an existing well-known and well-performing attack: Projected Gradient Descent (PGD) \cite{madry2019deep}. This should function as a template for the practical application of our formalization and the associated game.
As a reminder, the multistep PGD attack can be defined as the following:
\begin{gather}
    x^{t+1} = \Pi_{x+\mathcal{S}}(x^{t} + \alpha \text{ sgn}(\nabla_xL(\theta,x,y))) 
\end{gather}
where $x$ is the original benign sample with its label $y$, $x^{t}$ is the input at step $t$ of the attack, $\mathcal{S} \subseteq \mathbb{R}^d $ is the set of allowed perturbations ($d$ is the input dimension), $\theta$ are the model parameters, $L$ is a loss function, sgn is the sign function (-1 if input is negative, 1 otherwise) and $\alpha$ is the step size.
In their paper, they consider the following adversaries:
\begin{enumerate}[noitemsep, topsep=0pt]
    \item "White-box attacks with PGD for a different number of iterations and restarts, denoted by source $A$".
    \item "White-box attacks with PGD using the Carlini-Wagner (CW) loss function (directly optimizing the difference between correct and incorrect logits)".
    \item "Black-box attacks from an independently trained copy of the network, denoted $A'$".
    \item "Black-box attacks from a version of the same network trained only on natural examples, denoted $A_{nat}$".
    \item "Black-box attacks from a different convolution architecture, denoted B".
\end{enumerate}
Table \ref{table:showcase} summarizes the adversary knowledge used in each version.


\begin{table}[!ht]
    \centering
    \resizebox{\columnwidth}{!}{

    \begin{tabular}{@{}cccccc@{}}
        \toprule
        \textbf{Attack} & \textbf{Model} & \textbf{Data} & \textbf{Train} & \textbf{Defense} \\
        \midrule
        1 &  Parameters & $\emptyset$ & \begin{tabular}[c]{@{}c@{}}Training \\ Information \\ (loss function)\end{tabular} & $\emptyset$ \\
        \addlinespace
        2 & Parameters & $\emptyset$ & $\emptyset$ & \begin{tabular}[c]{@{}c@{}}Full\\  Awareness\end{tabular} \\
        \addlinespace
        3 & \begin{tabular}[c]{@{}c@{}}Scores \& \\ Architecture\end{tabular} & \begin{tabular}[c]{@{}c@{}}Training \\ Data\end{tabular} & \begin{tabular}[c]{@{}c@{}}Original \\ Function\end{tabular} & \begin{tabular}[c]{@{}c@{}}Full\\  Awareness\end{tabular}\\
        \addlinespace
        4 & \begin{tabular}[c]{@{}c@{}}Scores \& \\ Architecture\end{tabular} & \begin{tabular}[c]{@{}c@{}}Training \\ Data\end{tabular} & \begin{tabular}[c]{@{}c@{}}Original \\ Function\end{tabular} & $\emptyset$ \\
        \addlinespace
        5 &  \begin{tabular}[c]{@{}c@{}}Possible \\ Architectures\end{tabular} & \begin{tabular}[c]{@{}c@{}}Training \\ Data\end{tabular} & \begin{tabular}[c]{@{}c@{}}Original \\ Function\end{tabular} & \begin{tabular}[c]{@{}c@{}}Full\\  Awareness\end{tabular} \\
        \bottomrule
    \end{tabular}
    }
    \caption{PGD attacks knowledge table}\label{table:showcase}

\end{table}

Our knowledge tables allow for a clear and concise representation of the adversary's knowledge of the defender's information. We will then describe the components of the game for the first adversary (Attack \#1). This is meant as a template to showcase how one would describe their attack using our game. To do this, we need to define the following components: $O$, $D$, $D'$, $Evaluate_b$, $Classify$, $Train$, and $AdvGen$.

As shown in Table \ref{table:showcase}, we have the associated oracles used by the attacker in Attack \#1 that are the following: $O = \{\mathcal{O}_M, \mathcal{O}_{T_1}, \mathcal{O}_{FA}, \mathcal{O}^{0, 0}_X, \mathcal{O}_{Dist}\}$. For the training information (loss function) of Attack \# 1, we construct $\mathcal{O}_{T_1}$, we let \\$T_1 = \{Train' | g^1_{Train}(Train') = True\}$ where $g^1_{Train}$ is the function that returns True when $Train'$ uses the same loss function as $Train$.

$D$ varies depending on the dataset used, for example, on MNIST \cite{mnist}, they use an \textit{indistinguishable-perturbation} distinguisher with the $l_\infty$-norm metric and a maximum allowed perturbation of $\epsilon = 0.3$ (for pixel values between 0 and 1).
However, for CIFAR-10 \cite{cifar10}, they also use an \textit{indistinguishable-perturbation} distinguisher with the $l_\infty$-norm metric, but they use a maximum allowed perturbation of $\epsilon = 8$ (for RGB pixel values between 0 and 255).
They also explore a variant of their attack in the case of an $l_2$-bounded adversary but to keep this concise we do not analyze it. $\mathcal{O}_{Disc}(x) = D' = D$ for any $x \in \{0,1\}$ as it is assumed that the attacker knows exactly the setting it is in.

Their defense is part of the training process (adversarial training), therefore $Classify(M, x) = M(x)$. The adversary's goal is to modify the attacked model's accuracy, meaning it is an untargeted attack. Hence, $b=0$ and  $Evaluate_0(y, r) = \mathbb{I}[(r \neq y)]$.
$Train$ (as defined) is the algorithm/code used to train the attacked model, due to the sheer complexity of the algorithm, we will instead point out that they link their code in their paper \cite{madry2019deep}. We can summarize it as a standard Stochastic Gradient Descent (SGD) based model training algorithm that incorporates 
adversarial training as a defense. Finally, for the adversarial example generation, they use a grounded process, hence $a=0$ and $AdvGen$ can be defined as the algorithm described in Figure \ref{alg:pgd} (they have three hyperparameters $t$, $\alpha$, $\epsilon$).
\begin{figure}[h]

\pseudocodeblock[linenumbering, space=keep]{
\textbf{ $AdvGen(O)$} \< \< t, \alpha, \epsilon \\[0.2\baselineskip ][\hline ]
\mathcal{O}_M, \mathcal{O}_{Train}, \mathcal{O}_{FA}, \mathcal{O}^{0, 0}_X, \mathcal{O}_{Dist} \gets O \\
i \gets 0 \\
x, y \gets \mathcal{O}^{0, 0}_{X}(0) \\
\theta_M \gets \mathcal{O}_M(0) \\
\cdots, L, \cdots \gets \mathcal{O}_{Train}(0) \\
x^i \gets x  \\
\textbf{While } i < t \\
\textbf{Do }\\
x^{i+1} \gets \Pi_{x + \mathcal{S}}(x^i + \alpha \text{ sgn}(\nabla_xL(\theta_M,x,y))) \\
x^{i+1} \gets \text{clip}(x^{i+1}, x - \epsilon, x + \epsilon) \\
x^{i+1} \gets \text{clip}(x^{i+1}, 0, 1) \\
i \gets i + 1 \\
\textbf{Done } \\
\textbf{Return } x^i
}
\caption{PGD Attack Algorithm}\label{alg:pgd}
\end{figure}

\section{Survey Summary \& Methodology}
In Tables~\ref{tab:attack_summary} and~\ref{tab:appendix_attack_summary} in Appendix~\ref{app:appendix_attack_summary}, we summarize the findings of our survey by reporting the information used by each attack in the surveyed papers as well as the metric used to craft the adversarial examples. In the case where papers evaluate their attacks under multiple scenarios, we split the attack into multiple variants lettered (A), (B), and so on.

We focus on recent papers, papers published in 2022 or after to narrow the field of the
search and allow for an up-to-date perspective of the field of image classification adversarial research. We gathered a total of eighty-three papers  \cite{zhang2022imagenet, Liu_Lu_Xiong_Zhang_Xiong_2023, Shamshad_Naseer_Nandakumar_2023, Zhong_2022_CVPR, Zou_Duan_Li_Zhang_Pan_Pan_2022, Wei_Zhao_2023, Wei_Chen_Wu_Jiang_2023, Weng_Luo_Lin_Li_Zhong_2023, Zhang_Tan_Sun_Zhao_Zhang_Li_2023,
Tao_Wu_2022, 10147340, Deng_Xiao_Li_He_Wang_2023, Ge_Shang_Liu_Liu_Wang_2023, Ge_Shang_Liu_Liu_Wan_Feng_Wang_2023, Liu_Ge_Zhou_Shang_Liu_Jiao_2023, Wang_Yang_Feng_Sun_Guo_Zhang_Ren_2023, Tramer_2022, Byun_Kwon_Cho_Kim_Kim_2023, Chen_Yin_Chen_Chen_Liu_2023,
 Li_Guo_Yang_Zuo_Chen_2023, Li_Guo_Zuo_Chen_2023, Qin_Fan_Liu_Shen_Zhang_Wang_Wu_2022, Yang_Lin_Zhang_Yang_Zhao_2023, Yang_Lin_Li_Shen_Zhou_Wang_Liu_2023, Qi_Huang_Panda_Henderson_Wang_Mittal_2023, Pintor_Demetrio_Sotgiu_Demontis_Carlini_Biggio_Roli_2022, Pomponi_2022, Li_decisionbased2022, liu2022practical, liu2022transferable, Apruzzese_Anderson_Dambra_Freeman_Pierazzi_Roundy_2023, 
 Agarwal_Ratha_Vatsa_Singh_2022, Aldahdooh_Hamidouche_Fezza_Deforges_2022, Altinisik_Messaoud_Sencar_Chawla_2023, Byun_2022_CVPR, chenImageRetrieval2023, Chen_Liu_2023, Crecchi_Melis_Sotgiu_Bacciu_Biggio_2022, Dai_Feng_Chen_Lu_Xia_2022, Dunmore_Jang-Jaccard_Sabrina_Kwak_2023, Dyrmishi_Ghamizi_Simonetto_Traon_Cordy_2023, Freiesleben_2022, Gong_Deng_2022, Gubri_Cordy_Papadakis_Traon_Sen_2022, Hernandez-Castro_Liu_Serban_Tsingenopoulos_Joosen_2022, Hu_Li_Yuan_Cheng_Yuan_Zhu_2022, Jun-hua_Ye-xin_Chuan-lun_Jun-yang_Xing-yu_Zhi-song_2022, Lee_Kim_2023, Li_Cheng_Hsieh_Lee_2022, Liang_Wu_Hua_Zhang_Xue_Song_Xue_Ma_Guan_2023, Liang_He_Zhao_Jia_Li_2022, Lin_Hsu_Chen_Yu_2022, Liu_Zhang_Mo_Xiang_Li_Cheng_Gao_Liu_Chen_Wei_2022, Luo_2022_CVPR, midtlidMASSA, Namiot_Ilyushin_2022, Pang_Zhang_He_Dong_Su_Chen_Zhu_Liu_2022, Pawelczyk_Agarwal_Joshi_Upadhyay_Lakkaraju_2022, Pinot_Meunier_Yger_Gouy-Pailler_Chevaleyre_Atif_2022,  qi2022, Qian_Huang_Wang_Zhang_2022, Ruiz_Kortylewski_Qiu_Xie_Bargal_Yuille_Sclaroff_2022, Sajeeda_Hossain_2022, Singh_Awasthi_Urvashi_2022, Tuna_Catak_Eskil_2022, Vos_Verwer_2022, Sheatsley_Hoak_Pauley_McDaniel_2022, Gubri_Cordy_Traon_2023, Li_Yu_Huang_2023, 10219877, wang_2022_vit, pmlr-v162-yamamura22a, long2022frequency, chen2023diffusion, chen2023contentbased, dai2023advdiff, liu2023diffprotect, lin2023diffusion, diffpure, bose2020adversarial, Wang_2021_ICCV, aigan2021, chen2022shape}, that we then narrowed to twenty using quality and relevance \cite{Byun_2022_CVPR, Gubri_Cordy_Papadakis_Traon_Sen_2022, Li_decisionbased2022, liu2022transferable, liu2022practical, Luo_2022_CVPR, midtlidMASSA, Pomponi_2022, wang_2022_vit, pmlr-v162-yamamura22a, long2022frequency, zhang2022imagenet, Zhong_2022_CVPR, Zou_Duan_Li_Zhang_Pan_Pan_2022, chen2023diffusion, chen2023contentbased, bose2020adversarial, Wang_2021_ICCV, aigan2021, chen2022shape}. None of the papers gathered used any defense information (as it is more relevant to adaptive attacks) so we exclude it from Table \ref{tab:attack_summary}. Table \ref{tab:attack_summary} contains only the papers included in the evaluation (Section \ref{sec:evaluation}). For the rest, we refer to Table \ref{tab:appendix_attack_summary} in Appendix \ref{app:appendix_attack_summary}.

\begin{table}[h!]
    \centering

    \resizebox{\columnwidth}{!}{
    \begin{tabular}{@{}cccccc@{}}
        \toprule
        \begin{tabular}[c]{@{}c@{}}\textbf{Attack}\\ (Targeted)\end{tabular} & \textbf{Model} & \textbf{Data} & \textbf{Train}& \textbf{Metric} \\
        \midrule
        \begin{tabular}[c]{@{}c@{}}LGV (A) \\\cite{Gubri_Cordy_Papadakis_Traon_Sen_2022}\end{tabular} & \begin{tabular}[c]{@{}c@{}}Possible \\ Architectures\end{tabular}& \begin{tabular}[c]{@{}c@{}}Training\\Data\end{tabular} & \begin{tabular}[c]{@{}c@{}}Training \\ Function \end{tabular} & l$_\infty$ \\
        \addlinespace
        \begin{tabular}[c]{@{}c@{}}LGV (B)\\ \cite{Gubri_Cordy_Papadakis_Traon_Sen_2022}\end{tabular} & Parameters & $\emptyset$ & $\emptyset$ & l$_\infty$\\
        \midrule
        \begin{tabular}[c]{@{}c@{}}Pixle\\ \cite{Pomponi_2022} (both)\end{tabular}& Scores & $\emptyset$ & $\emptyset$ & l$_0$\\
        \midrule
        \begin{tabular}[c]{@{}c@{}}MASSA \\ \cite{midtlidMASSA}\end{tabular} & Labels & $\emptyset$ & $\emptyset$ & l$_2$\\
        \midrule
        \begin{tabular}[c]{@{}c@{}}SSAH (A) \\ \cite{Luo_2022_CVPR} (both)\end{tabular}& Parameters & \begin{tabular}[c]{@{}c@{}}Same\\Distribution\end{tabular} & $\emptyset$ & l$_\infty$\\
        \addlinespace
        \begin{tabular}[c]{@{}c@{}}SSAH (B) \\ \cite{Luo_2022_CVPR} (both)\end{tabular}& \begin{tabular}[c]{@{}c@{}}Possible \\ Architectures\end{tabular} & \begin{tabular}[c]{@{}c@{}} Other Data\end{tabular} & \begin{tabular}[c]{@{}c@{}}Training \\ Function \end{tabular} & l$_\infty$\\
        \midrule
        \begin{tabular}[c]{@{}c@{}}BIA (A) \\ \cite{zhang2022imagenet} \end{tabular}& \begin{tabular}[c]{@{}c@{}}Possible \\ Architectures\end{tabular} & Other Data & \begin{tabular}[c]{@{}c@{}}Training \\ Function \end{tabular} & l$_\infty$\\
        \addlinespace
        \begin{tabular}[c]{@{}c@{}}BIA (B) \\ \cite{zhang2022imagenet}\end{tabular} & \begin{tabular}[c]{@{}c@{}}Possible \\ Architectures\end{tabular} & \begin{tabular}[c]{@{}c@{}}Training\\Data\end{tabular} & \begin{tabular}[c]{@{}c@{}}Training \\ Function \end{tabular} & l$_\infty$\\
        \addlinespace
        \begin{tabular}[c]{@{}c@{}}BIA (C) \\ \cite{zhang2022imagenet} \end{tabular} & Parameters & \begin{tabular}[c]{@{}c@{}}Training\\Data\end{tabular} & $\emptyset$ & l$_\infty$ \\
        \midrule
        \begin{tabular}[c]{@{}c@{}}ACG \\ \cite{pmlr-v162-yamamura22a} \end{tabular}& Parameters & $\emptyset$ & \begin{tabular}[c]{@{}c@{}}Loss \\ Function \end{tabular} & l$_\infty$\\
        \midrule
        \begin{tabular}[c]{@{}c@{}}ATA (A)\\ \cite{wang_2022_vit} \end{tabular}& \begin{tabular}[c]{@{}c@{}}Possible \\ Architectures\end{tabular} & \begin{tabular}[c]{@{}c@{}}Training\\Data\end{tabular} & \begin{tabular}[c]{@{}c@{}}Training \\ Function \end{tabular} & l$_0$ \\
        \addlinespace
        \begin{tabular}[c]{@{}c@{}}ATA (B) \\ \cite{wang_2022_vit} \end{tabular}& Parameters & $\emptyset$ & $\emptyset$ & l$_0$ \\
        \midrule
        \begin{tabular}[c]{@{}c@{}}AEG (A) \\ \cite{bose2020adversarial} \end{tabular}& Architecture & \begin{tabular}[c]{@{}c@{}}Same\\Distribution\end{tabular} & \begin{tabular}[c]{@{}c@{}} Loss \\ Function \& \\Optimizer \end{tabular} & l$_\infty$  \\
        \addlinespace
        \begin{tabular}[c]{@{}c@{}}AEG (B)\\ \cite{bose2020adversarial}\end{tabular} & \begin{tabular}[c]{@{}c@{}}Possible \\ Architectures\end{tabular} & \begin{tabular}[c]{@{}c@{}}Training\\Data\end{tabular} & \begin{tabular}[c]{@{}c@{}} Loss \\ Function \&  \\Optimizer\end{tabular} & l$_\infty$\\
        \midrule
        \begin{tabular}[c]{@{}c@{}}$\text{A}^3$ \\ \cite{liu2022practical} \end{tabular}& Parameters & \begin{tabular}[c]{@{}c@{}}Same\\Distribution\end{tabular} & \begin{tabular}[c]{@{}c@{}}Loss \\ Function \end{tabular} & l$_\infty$\\
        \bottomrule

\end{tabular}
}
\caption{Attack Summary Table. We provide the information used by the attacker for each of the categories and the metric used to craft the adverarsarial examples.}
\label{tab:attack_summary}
\end{table}

\section{Evaluation}\label{sec:evaluation}
For this work, we focus our evaluation on indistinguishable perturbation untargeted attacks that were evaluated on the ImageNet \cite{Imagenet} and CIFAR-10 \cite{cifar10} datasets, as these are the most studied datasets in the attacks we surveyed.
To compare attacks, we only compare evaluations within datasets and we only work with results presented by the authors themselves in their papers. We first, if possible, compare papers against models that are from the exact same source (public provider). If that's not possible, we compare models of the same architecture while also looking at their benign accuracies. In the case where a paper does not provide benign accuracies, we use the benign accuracy from another paper but report the result in italic to show its unreliability. 

\subsection{Attacks on the ImageNet dataset}\label{subseq:ind_un_imagenet}

First, we tackle the most evaluated dataset that we study in our work, with eleven variants of attacks evaluated against it: ImageNet \cite{Imagenet}. We perform four comparisons in total.
We summarize the results of our comparisons on ImageNet in Table \ref{tab:imagenet}. The only paper to evaluate their attack on defended models is ACG \cite{pmlr-v162-yamamura22a} where they get an average ESR of 68.50\% and an average score of 0.314.

\begin{table}[]
\resizebox{\columnwidth}{!}{
\begin{tabular}{cccc|cc}
\begin{tabular}[c]{@{}c@{}}Model \\ Information\end{tabular}                      & \begin{tabular}[c]{@{}c@{}}Data \\ Information\end{tabular}                  & \begin{tabular}[c]{@{}c@{}}Training \\ Information\end{tabular}               & \begin{tabular}[c]{@{}c@{}}Attack\\ Variant\end{tabular} & \begin{tabular}[c]{@{}c@{}}Und.\\ ESR\end{tabular} & \begin{tabular}[c]{@{}c@{}}Und.\\ Score\end{tabular} \\ \hline
\multirow{4}{*}{\begin{tabular}[c]{@{}c@{}}Possible\\ Architectures\end{tabular}} & \multirow{3}{*}{\begin{tabular}[c]{@{}c@{}}Training\\Data\end{tabular}}                                               & \multirow{3}{*}{\begin{tabular}[c]{@{}c@{}}Training \\ Function\end{tabular}} & LGV (A)                                                  & \textit{72.4}                                            & \textit{0.497}                                             \\
                                                                                  &                                                                              &                                                                               & BIA (B)                                                  & 93.48                                                    & 0.811                                                      \\
                                                                                  &                                                                              &                                                                               & ATA (A)                                                  & \textit{79.65}                                           & \textit{0.524}                                             \\ \cline{2-6} 
                                                                                  & \begin{tabular}[c]{@{}c@{}}Training \\Data \&\\ Same \\Distribution\end{tabular} & \begin{tabular}[c]{@{}c@{}}Training\\ Function\end{tabular}                   & SSAH (B)                                                 & \textit{19.14}                                           & N/A                                                            \\ \hline
Labels                                                                            & $\emptyset$                                                                       & $\emptyset$                                                                        & MASSA                                                    & \textit{99.49}                                           & \textit{0.916}                                             \\ \hline
Scores                                                                            & $\emptyset$                                                                       & $\emptyset$                                                                        & Pixle                                                    & \textit{98.5}                                            & \textit{0.897}                                             \\ \hline
\multirow{5}{*}{Parameters}                                                       & \begin{tabular}[c]{@{}c@{}}Training\\Data\end{tabular}                                                                & $\emptyset$                                                                        & BIA (C)                                                  & 97.82                                                    & 0.883                                                      \\ \cline{2-6} 
                                                                                  & \begin{tabular}[c]{@{}c@{}}Same\\Distribution\end{tabular}                                                           & $\emptyset$                                                                        & SSAH (A)                                                 & 95.56                                                    & 0.915                                                      \\ \cline{2-6} 
                                                                                  & \multirow{3}{*}{$\emptyset$}                                                       & \multirow{2}{*}{$\emptyset$}                                                        & LGV (B)                                                  & \textit{97.1}                                            & \textit{0.885}                                             \\
                                                                                  &                                                                              &                                                                               & ATA (B)                                                  & \textit{100.0}                                           &    N/A                                                        \\ \cline{3-6} 
                                                                                  &                                                                              & \begin{tabular}[c]{@{}c@{}}Loss \\ Function\end{tabular}                      & ACG                                                      &    N/A                                                      &           N/A                                                 \\ \hline
\end{tabular}
}
\caption{ImageNet information and attack results. "Und." stands for undefended.}
\label{tab:imagenet}
\end{table}

\noindent
The takeaways for ImageNet are the following:
\begin{itemize}[noitemsep, topsep=0pt]
    \item Only query-based access to the model and no other information is sufficient to reduce an undefended model's accuracy to almost 0\%.
    \item Transferable settings can achieve extremely high ESRs (93.48\% for BIA (B)), which shows that, at least in the undefended case, transferable attacks that use additional data and training information (usually to train surrogates) can 
    achieve ESRs close to query-based and white-box attacks. Unfortunately, since none of the transferable attacks we studied evaluated defended models, we cannot say if this extends to defended models.
    \item ImageNet-defended models are mostly broken by white-box attacks. ACG obtains an average ESR of 68.5\% and an average score 0.314. While, defended models do perform better than undefended ones, the accuracy loss is still significant.
\end{itemize}

\subsubsection{LGV (A) \& BIA (B) \& ATA (A) \& SSAH (B)}\label{subsubsec:1}
We start by comparing the different transferable attacks: LGV (A) \cite{Gubri_Cordy_Papadakis_Traon_Sen_2022}, BIA (B) \cite{zhang2022imagenet}, ATA (A) \cite{wang_2022_vit}, SSAH (B) \cite{Luo_2022_CVPR}.
All these attacks attack undefended papers. Some share architectures, however, not all of them provide benign accuracies for the models they evaluate.
We compile the results in Tables \ref{table:ind_un_imagenet_blue_comparison_asr}.
BIA (B), LGV(A), and SSAH (B) both use the $l_\infty$-norm but BIA (B) uses a perturbation budget of 10/256 whereas SSAH (B) uses one of 8/256. LGV (A) does not specify the budget used. ATA (A) uses the $l_0$-norm, where they are only allowed
to fully modify 1024 pixels. Unfortunately, this means that the following comparison is disparate.  
\begin{table}[]
    \centering
    \resizebox{\columnwidth}{!}{
    \begin{tabular}{@{}cccccc@{}}
        \toprule
        \textbf{Model}  & \begin{tabular}[c]{@{}c@{}}\textbf{Benign}\\ \textbf{ESR} \\ \textbf{ (BIA)}\end{tabular} & \begin{tabular}[c]{@{}c@{}}\textbf{LGV (A)}\\ \textbf{ESR /}\\ \textbf{Score}\end{tabular} & \begin{tabular}[c]{@{}c@{}}\textbf{BIA (B)}\\ \textbf{ESR /}\\ \textbf{Score}\end{tabular} & \begin{tabular}[c]{@{}c@{}}\textbf{ATA (A)}\\ \textbf{ESR /}\\ \textbf{Score}\end{tabular} & \begin{tabular}[c]{@{}c@{}}\textbf{SSAH (B)}\\ \textbf{ESR /}\\ \textbf{Score}\end{tabular} \\
        \midrule
        \begin{tabular}[c]{@{}c@{}}ResNet-50\\ \cite{resnet}\end{tabular} & 24.39 & \begin{tabular}[c]{@{}c@{}}N/A \\N/A\end{tabular} & \begin{tabular}[c]{@{}c@{}}\textbf{95.56} \\ \textbf{0.854} \end{tabular}& \begin{tabular}[c]{@{}c@{}} \textit{87.31} \\ \textit{0.703} \end{tabular} & \begin{tabular}[c]{@{}c@{}}N/A \\ N/A \end{tabular}\\
        \midrule
        \begin{tabular}[c]{@{}c@{}}VGG-16 \\ \cite{vgg} \end{tabular} & N/A & \begin{tabular}[c]{@{}c@{}}N/A \\ N/A \end{tabular}& \begin{tabular}[c]{@{}c@{}}N/A \\ N/A \end{tabular} & \begin{tabular}[c]{@{}c@{}}\textbf{87.46} \\ N/A \end{tabular} & \begin{tabular}[c]{@{}c@{}}\textit{19.14} \\ N/A \end{tabular} \\
        \midrule
        \begin{tabular}[c]{@{}c@{}}DenseNet-121 \\ \cite{densenet} \end{tabular}& 25.78 & \begin{tabular}[c]{@{}c@{}}N/A \\ N/A \end{tabular}& \begin{tabular}[c]{@{}c@{}}\textbf{96.02} \\ \textbf{0.856}\end{tabular} & \begin{tabular}[c]{@{}c@{}}\textit{64.17} \\ \textit{0.345} \end{tabular} & \begin{tabular}[c]{@{}c@{}}N/A \\ N/A \end{tabular}\\
        \midrule
        \begin{tabular}[c]{@{}c@{}}ResNet-152 \\ \cite{resnet}\end{tabular} & 22.66 & \begin{tabular}[c]{@{}c@{}}\textit{89.6} \\ \textit{0.751}\end{tabular}& \begin{tabular}[c]{@{}c@{}}\textbf{94.15} \\ \textbf{0.835}\end{tabular} & \begin{tabular}[c]{@{}c@{}}N/A \\ N/A \end{tabular}& \begin{tabular}[c]{@{}c@{}}N/A \\ N/A \end{tabular} \\
        \midrule
        \begin{tabular}[c]{@{}c@{}}VGG-19 \\ \cite{vgg} \end{tabular} & 29.05 & \begin{tabular}[c]{@{}c@{}}\textit{82.2} \\ \textit{0.591} \end{tabular}& \begin{tabular}[c]{@{}c@{}}\textbf{95.91} \\ \textbf{0.835}\end{tabular} & \begin{tabular}[c]{@{}c@{}}N/A \\ N/A \end{tabular}& \begin{tabular}[c]{@{}c@{}}N/A \\ N/A \end{tabular}\\
        \midrule
        \begin{tabular}[c]{@{}c@{}}Inception-v3 \\ \cite{inceptionv3}\end{tabular} & 23.81 & \begin{tabular}[c]{@{}c@{}}\textit{45.4} \\ \textit{0.149}\end{tabular} & \begin{tabular}[c]{@{}c@{}}\textbf{85.76} \\ \textbf{0.679}\end{tabular} & \begin{tabular}[c]{@{}c@{}}N/A \\ N/A \end{tabular}& \begin{tabular}[c]{@{}c@{}}N/A \\ N/A \end{tabular}\\
        \midrule
        Average & 25.14 & \begin{tabular}[c]{@{}c@{}}\textit{72.4} \\ \textit{0.497}\end{tabular} & \begin{tabular}[c]{@{}c@{}}\textbf{93.48} \\ \textbf{0.811}\end{tabular} & \begin{tabular}[c]{@{}c@{}}\textit{79.65} \\ \textit{0.524}\end{tabular} & \begin{tabular}[c]{@{}c@{}}\textit{19.14} \\ N/A\end{tabular} \\
        \midrule
        \begin{tabular}[c]{@{}c@{}}Standard \\deviation\end{tabular} & 2.46 &  \begin{tabular}[c]{@{}c@{}}\textit{23.67} \\ \textit{0.312 }\end{tabular} & \begin{tabular}[c]{@{}c@{}}4.38 \\ 0.075 \end{tabular}&  \begin{tabular}[c]{@{}c@{}}\textit{13.4} \\ \textit{0.253}\end{tabular} & \begin{tabular}[c]{@{}c@{}}N/A \\ N/A \end{tabular} \\
        \bottomrule
    \end{tabular}
    }
    \caption{ImageNet dataset indistinguishable Section \ref{subsubsec:1} undefended results.}
    \label{table:ind_un_imagenet_blue_comparison_asr}
\end{table}

BIA (B) outperforms the other attacks. Due to the lack of results for SSAH (B) and the fact that the few results are worse than the other attacks even though SSAH (B) has more information, we ignore it in what follows. All the other three attacks use the same information, there are three possible reasons for this disparity in results:
\begin{enumerate}[noitemsep, topsep=0pt]
    \item BIA (B) is a better algorithm for extracting information and building a potent attack from it.
    \item LGV (A) and ATA (A) attacked undefended models that were somehow inherently more robust (unlikely).
    \item The disparity of results is caused by the different perturbation budgets/$l_p$-norms used. 
\end{enumerate}
We suspect a combination of 1. and 3. to be the reason explaining the discrepancy in the results.
Additionally, ATA's attack is originally meant to attack vision transformer models. Hence, it could partially explain a slightly worse performance in an equitable evaluation setting.

\subsubsection{LGV (B) \& ATA (B) \& ACG} \label{subsubsec:2}
This comparison is difficult to perform, as LGV (B) and ATA (B) both attack different undefended models while not providing any benign accuracies. We can use the benign accuracy of the BIA paper as a substitute for LGV (B) but not for ATA (B).
On the other hand, ACG \cite{pmlr-v162-yamamura22a} attacks only defended models, therefore we cannot directly compare with the other attacks.
We compile the results in Tables \ref{table:ind_un_imagenet_lgv_b_undefended} and \ref{table:ind_un_imagenet_acg_defended}.

\begin{table}[h!]
    \centering

    \resizebox{\columnwidth}{!}{
    \begin{tabular}{@{}ccccc@{}}
        \toprule
        \textbf{Model}  & \begin{tabular}[c]{@{}c@{}}\textbf{Benign}\\ \textbf{ESR}\end{tabular} & \begin{tabular}[c]{@{}c@{}}\textbf{LGV (B)}\\ \textbf{ESR / Score}\end{tabular} & \begin{tabular}[c]{@{}c@{}}\textbf{ATA (B)}\\ \textbf{ESR / Score}\end{tabular} \\
        \midrule
        ResNet-50 \cite{resnet} & 24.39 (BIA) & \textit{97.1} / \textit{0.883}& N/A / N/A \\
        \addlinespace
        DeiT-T \cite{deit} & N/A & N/A / N/A& \textit{100.0} / N/A\\
        \addlinespace
        DeiT-S \cite{deit} & N/A & N/A / N/A& \textit{100.0} / N/A\\
        \addlinespace
        DeiT-B \cite{deit} & N/A & N/A / N/A& \textit{100.0} / N/A\\
        \bottomrule
    \end{tabular}
    }
    \caption{ImageNet dataset indistinguishable undefended results.}
    \label{table:ind_un_imagenet_lgv_b_undefended}
\end{table}

\begin{table}[h!]
    \centering

    \begin{tabular}{@{}ccccc@{}}
        \toprule
        \textbf{Model}  & \begin{tabular}[c]{@{}c@{}}\textbf{Benign}\\ \textbf{ESR}\end{tabular} & \begin{tabular}[c]{@{}c@{}}\textbf{ACG}\\ \textbf{ESR}\end{tabular} & \begin{tabular}[c]{@{}c@{}}\textbf{ACG}\\ \textbf{Score}\end{tabular} \\
        \midrule
        ResNet-50$_\text{robustness}$ \cite{robustness} & 37.44 & 69.58 & 0.344 \\
        \addlinespace
        ResNet-50$_\text{salman}$ \cite{salman2020adversarially} & 35.98 & 64.7 & 0.289 \\
        \addlinespace
        ResNet-18$_\text{salman}$ \cite{salman2020adversarially} & 47.08 & 74.34 & 0.331 \\
        \addlinespace
        WideResNet-50-2$_\text{salman}$ \cite{salman2020adversarially} & 31.54 & 60.9 & 0.271 \\
        \addlinespace
        ResNet-50$_\text{FAST\_AT}$ \cite{wong2020fast} & 44.38 & 73.0 & 0.336 \\
        \midrule
        Average & 39.28 & 68.50 & 0.314 \\
        Standard deviation & 6.34 & 5.65 & 0.032 \\
        \bottomrule
    \end{tabular}
    \caption{ImageNet dataset indistinguishable ACG defended results.}
    \label{table:ind_un_imagenet_acg_defended}
\end{table}

\subsubsection{BIA (C) \& SSAH (A) \& Section \ref{subsubsec:2} }\label{subsubsec:3}
Both BIA (C) and SSAH (A) evaluate only against undefended models, therefore comparing against ACG is not possible. We can, however, compare SSAH (A) and LGV (B) as they both evaluate against a ResNet-50 \cite{resnet}.
Since SSAH (A) publishes their ResNet-50's benign accuracy and LGV (B) does not, we also use BIA's benign accuracy from the previous evaluation. We compute LGV (B)'s score using the mean of both benign accuracies to get closer to LGV (B)'s ResNet-50's
expected benign accuracy and get a more reliable result. BIA (C) only evaluates their white-box scenario against a VGG-16 and a DenseNet-169 in the undefended setting. We compile the results in Table \ref{table:ind_un_imagenet_lgv_b_and_ssah_a_undefended}.

\begin{table}[ht]
    \centering
    \resizebox{\columnwidth}{!}{
    \begin{tabular}{@{}cccccc@{}}
        \toprule
        \textbf{Model} & \begin{tabular}[c]{@{}c@{}}\textbf{Benign}\\ \textbf{ESR}\end{tabular} & \begin{tabular}[c]{@{}c@{}}\textbf{LGV (B)}\\ \textbf{ESR} /\\ \textbf{Score}\end{tabular} & \begin{tabular}[c]{@{}c@{}}\textbf{SSAH (A)}\\ \textbf{ESR} /\\ \textbf{Score}\end{tabular} & \begin{tabular}[c]{@{}c@{}}\textbf{BIA (C)}\\ \textbf{ESR} /\\ \textbf{Score}\end{tabular}\\
        \midrule
        \begin{tabular}[c]{@{}c@{}}ResNet-50 \\\cite{resnet}\end{tabular} & \begin{tabular}[c]{@{}c@{}} 24.39 (BIA) \\ 23.85 (SSAH)\end{tabular} & \begin{tabular}[c]{@{}c@{}}\textit{97.1} \\ \textit{0.885}\end{tabular} & \begin{tabular}[c]{@{}c@{}}\textbf{98.56} \\ \textbf{0.915}\end{tabular}& \begin{tabular}[c]{@{}c@{}}NA \\ NA \end{tabular}   \\
        \midrule
        \begin{tabular}[c]{@{}c@{}}VGG-16\\ \cite{vgg}\end{tabular} & 29.86 (BIA) &  \begin{tabular}[c]{@{}c@{}}NA \\ NA \end{tabular} & \begin{tabular}[c]{@{}c@{}}NA \\ NA \end{tabular} & \begin{tabular}[c]{@{}c@{}}98.96 \\ 0.890\end{tabular} \\
        \midrule
        \begin{tabular}[c]{@{}c@{}}DenseNet-169\\ \cite{densenet}\end{tabular} & 24.25 (BIA) &  \begin{tabular}[c]{@{}c@{}}NA \\ NA \end{tabular} & \begin{tabular}[c]{@{}c@{}}NA \\ NA \end{tabular} & \begin{tabular}[c]{@{}c@{}}96.68 \\ 0.876\end{tabular} \\
        \midrule
        Average & 26.08 & \begin{tabular}[c]{@{}c@{}}\textit{97.1} \\ \textit{0.885}\end{tabular} & \begin{tabular}[c]{@{}c@{}}\textbf{98.56} \\ \textbf{0.915}\end{tabular} & \begin{tabular}[c]{@{}c@{}}97.82 \\ 0.883\end{tabular} \\
        \midrule
        \begin{tabular}[c]{@{}c@{}}Standard \\deviation\end{tabular} & 3.28 & \begin{tabular}[c]{@{}c@{}}NA \\ NA \end{tabular} &  \begin{tabular}[c]{@{}c@{}}NA \\ NA \end{tabular} & \begin{tabular}[c]{@{}c@{}}1.61 \\ 0.010\end{tabular} \\
         \bottomrule
    \end{tabular}
    }
        \caption{ImageNet dataset indistinguishable Section \ref{subsubsec:3} undefended results.}
    \label{table:ind_un_imagenet_lgv_b_and_ssah_a_undefended}
\end{table}

\subsubsection{MASSA \& Pixle}\label{subsubsec:4}
We include the results from MASSA \cite{midtlidMASSA} and Pixle \cite{Pomponi_2022} to complete our comparison.
Since neither paper provides clean accuracies, we use the average of the benign accuracies from BIA and SSAH to compute the score. This yields the compiled results
in Table \ref{table:ind_un_imagenet_massa_and_pixle_undefended}. While MASSA evaluates its performance against defended models, they are the sole paper to use the $l_2$-norm for its distinguisher. Additionally, they do not provide clean accuracies for any of their models.
Pixle on the other hand does not attack any defended models. Our results show that query-based access to a model is sufficient to render it useless if the model is undefended.

\begin{table}[ht]
    \centering

    \resizebox{\columnwidth}{!}{
    \begin{tabular}{@{}ccccc@{}}
        \toprule
        \textbf{Model} & \begin{tabular}[c]{@{}c@{}}\textbf{Benign}\\ \textbf{ESR}\end{tabular} & \begin{tabular}[c]{@{}c@{}}\textbf{MASSA}\\ \textbf{ESR /Score}\end{tabular} & \begin{tabular}[c]{@{}c@{}}\textbf{Pixle}\\ \textbf{ESR / Score}\end{tabular} \\
        \midrule
        ResNet-50 \cite{resnet} & \begin{tabular}[c]{@{}c@{}} 24.39 (BIA) \\ 23.85 (SSAH)\end{tabular} & \textit{\textbf{99.4}} / \textit{\textbf{0.930}} & \textit{98.0} / \textit{0.902} \\
        \addlinespace
        VGG-16 \cite{vgg} & 29.86 (BIA) & \textit{\textbf{99.58}} / \textit{\textbf{0.902}} & \textit{99.0} / \textit{0.891} \\
        \midrule
        Average & 26.99 & \textbf{\textit{99.49}} / \textit{\textbf{0.916}}& \textit{98.5} / \textit{0.897} \\
        Standard deviation & 4.06 & 0.12 / 0.019 & 0.7 / 0.008\\
        \bottomrule
    \end{tabular}
    }
    \caption{ImageNet dataset indistinguishable Section \ref{subsubsec:4} undefended results.}
    \label{table:ind_un_imagenet_massa_and_pixle_undefended}
\end{table}

\subsection{Attacks on the CIFAR10 dataset}\label{subseq:ind_un_cifar10}

The second dataset we study is the CIFAR-10 dataset \cite{cifar10}.
We can perform four comparisons on a total of seven variants of attacks. 
We summarize our results in Table \ref{tab:cifar10}.

\begin{table}[]
\resizebox{\columnwidth}{!}{
\begin{tabular}{cccc|cc}
\begin{tabular}[c]{@{}c@{}}Model \\ Information\end{tabular}                      & \begin{tabular}[c]{@{}c@{}}Data \\ Information\end{tabular} & \begin{tabular}[c]{@{}c@{}}Training \\ Information\end{tabular} & \begin{tabular}[c]{@{}c@{}}Attack\\ Variant\end{tabular} & \begin{tabular}[c]{@{}c@{}}Und.\\ ESR /\\ Score\end{tabular} & \begin{tabular}[c]{@{}c@{}}Def.\\ ESR /\\ Score\end{tabular} \\ \hline
\multirow{2}{*}{\begin{tabular}[c]{@{}c@{}}Possible\\ Architectures\end{tabular}} & \begin{tabular}[c]{@{}c@{}}Training\\ Data\end{tabular}                                               & \begin{tabular}[c]{@{}c@{}}Loss \&\\ Optimizer\end{tabular}     & \begin{tabular}[c]{@{}c@{}}AEG\\(B)\end{tabular}                                               & \begin{tabular}[c]{@{}c@{}}92.62 \\ 0.848 \end{tabular}                                                   & \begin{tabular}[c]{@{}c@{}}47.52 \\ 0.162 \end{tabular}                                                 \\ \cline{2-6} 
                                                                                  & Other Data                                                  & \begin{tabular}[c]{@{}c@{}}Training\\ Function\end{tabular}     & \begin{tabular}[c]{@{}c@{}}BIA\\(A)\end{tabular}                                               & \begin{tabular}[c]{@{}c@{}}47.19 \\ 0.219\end{tabular}                                                    & \begin{tabular}[c]{@{}c@{}}NA \\ NA \end{tabular}                                                      \\ \hline
Architecture                                                                      & \begin{tabular}[c]{@{}c@{}}Same \\Distribution\end{tabular}                                           & \begin{tabular}[c]{@{}c@{}}Loss \&\\ Optimizer\end{tabular}     & \begin{tabular}[c]{@{}c@{}}AEG\\(A)\end{tabular}                                                  & \begin{tabular}[c]{@{}c@{}}\textit{87.0} \\ \textit{N/A}\end{tabular}                                              & \begin{tabular}[c]{@{}c@{}}\textit{NA} \\ \textit{NA} \end{tabular}                                             \\ \hline
Scores                                                                            & $\emptyset$                                                       & $\emptyset$                                                           & Pixle                                                    & \begin{tabular}[c]{@{}c@{}}\textit{100.0} \\ \textit{0.983} \end{tabular}                                                     & \begin{tabular}[c]{@{}c@{}}NA \\ NA \end{tabular}                                                      \\ \hline
\multirow{3}{*}{Parameters}                                                       & \multirow{2}{*}{\begin{tabular}[c]{@{}c@{}}Same \\Distribution\end{tabular}}                          & $\emptyset$                                                           & \begin{tabular}[c]{@{}c@{}}SSAH\\(A)\end{tabular}                                                 & \begin{tabular}[c]{@{}c@{}}99.96 \\ 0.994  \end{tabular}                                                  & \begin{tabular}[c]{@{}c@{}}21.32 \\ 0.023 \end{tabular}                                                 \\ \cline{3-6} 
                                                                                  &                                                             & \begin{tabular}[c]{@{}c@{}}Loss\\ Function\end{tabular}         & A$^3$                                     & \begin{tabular}[c]{@{}c@{}}NA \\ NA \end{tabular}                                                        & \begin{tabular}[c]{@{}c@{}}42.13 \\ 0.163   \end{tabular}                                               \\ \cline{2-6} 
                                                                                  & $\emptyset$                                                       & \begin{tabular}[c]{@{}c@{}}Loss \\ Function\end{tabular}        & ACG                                                      & \begin{tabular}[c]{@{}c@{}}NA \\ NA \end{tabular}                                                        & \begin{tabular}[c]{@{}c@{}}41.70 \\ 0.160 \end{tabular}                                                  \\ \cline{1-6} 
\end{tabular}
}
\caption{CIFAR-10 information and attack results. "Und." stands for undefended and "Def." stands for defended.}
\label{tab:cifar10}
\end{table}

Unsurprisingly, we find that it requires little model information to get effective attacks if the attacker has access to additional information like data and training information.
Additionally, we still observe a strong transferable performance against defended models by AEG (B) where they obtain a score (0.167) very similar to white-box attacks (0.163 and 0.160).  
While this is still much worse than against undefended models, it would still in practice severely decrease the usability of the target model.

\subsubsection{BIA (A) \& AEG (B)}\label{subsubsec:5}

BIA (A) \cite{zhang2022imagenet} only attack a custom model, and therefore we treat their results as unreliable. Their model achieves 6.22\% benign ESR, while their attack achieves an ESR of 47.19\%, yielding a score of 0.219.
AEG (B) \cite{bose2020adversarial} on the other hand attack many common model architectures.

Since BIA (A) attacks a custom model, we have to perform an aggregate analysis of AEG (B)'s performance on both undefended and defended models to be able to compare.
We report the performance on undefended / defended models in Table \ref{table:ind_un_cifar10_undefended_aeg_b}.

\begin{table}[ht]
    \centering

    \begin{tabular}{@{}cccc@{}}
        \toprule
        \textbf{Model}  & \begin{tabular}[c]{@{}c@{}}\textbf{Benign}\\ \textbf{ESR}\end{tabular} & \begin{tabular}[c]{@{}c@{}}\textbf{AEG (B)}\\ \textbf{ESR}\end{tabular} & \begin{tabular}[c]{@{}c@{}}\textbf{AEG (B)}\\ \textbf{Score}\end{tabular} \\
        \midrule
        VGG-16 \cite{vgg} & 11.2 & 93.8 & 0.867 \\
        \addlinespace
        ResNet-18 \cite{resnet} & 13.1 & 97.3 & 0.930 \\
        \addlinespace
        WideResNet \cite{wide_resnet} & 6.8 & 85.2 & 0.721 \\
        \addlinespace
        DenseNet-121 \cite{densenet} & 11.2 & 94.1 & 0.873 \\
        \addlinespace
        Inception-v3 \cite{inceptionv3} & 9.9 & 92.7 & 0.850 \\
        \midrule 
        Average & 10.44 & 92.62 & 0.848 \\
        \addlinespace
        Standard deviation & 2.33 & 4.49 & 0.077 \\
        \midrule
        ResNet-18$_\text{ens3}$ \cite{tramer2020ensemble} & 16.8 & 52.2 & 0.244 \\
        \addlinespace
        WideResNet$_\text{ens3}$ \cite{tramer2020ensemble} & 12.8 & 49.9 & 0.232 \\
        \addlinespace
        DenseNet-121$_\text{ens3}$ \cite{tramer2020ensemble} & 21.5 & 41.4 & 0.125 \\
        \addlinespace
        Inception-v3$_\text{ens3}$ \cite{tramer2020ensemble} & 14.8 & 47.5 & 0.204 \\
        \addlinespace
        Madry-Adv \cite{madry2019deep} & 12.9 & 21.6 & 0.030 \\
        \midrule 
        Average & 15.76 & 47.52 & 0.167 \\
        \addlinespace
        Standard deviation & 3.60 & 12.37 & 0.090 \\
        \bottomrule
    \end{tabular}
    \caption{CIFAR10 dataset indistinguishable AEG (B) results.}
    \label{table:ind_un_cifar10_undefended_aeg_b}
    
\end{table}

If we treat BIA (A)'s model as undefended, then AEG (B) significantly outperforms BIA (A). This could either be due to the difference in the information required by both attacks
or that BIA (A) does not extract attack performance as well out of the information it uses as AEG (B). 

Unsurprisingly, attacking well-defended models in the transferable setting is quite a difficult task. The results in Table \ref{table:ind_un_cifar10_undefended_aeg_b} strongly imply that BIA (A)'s model is indeed 
undefended as it achieves a benign ESR much closer to the undefended models of Table \ref{table:ind_un_cifar10_undefended_aeg_b}.

\subsubsection{A$^3$ \& SSAH (A)}\label{subsubsec:6}
We can directly compare both attacks as they both attack WideResNet-34-10$_\text{TRADES}$ \cite{zhang2019defense}. We can confidently confirm that these are likely to be the same models as both papers report the same benign accuracy.
We present the results in Table \ref{table:ind_un_cifar10_purple_comparison}.

\begin{table}[ht]
    \centering

    \begin{tabular}{@{}ccccc@{}}
        \toprule
        \textbf{Model}  & \begin{tabular}[c]{@{}c@{}}\textbf{Benign}\\ \textbf{ESR}\end{tabular} & \begin{tabular}[c]{@{}c@{}}\textbf{SSAH (A)}\\ \textbf{ESR }/ \\ \textbf{Score}\end{tabular} & \begin{tabular}[c]{@{}c@{}}\textbf{A$^3$}\\ \textbf{ESR }/ \\ \textbf{Score}\end{tabular}\\
        \midrule
        WRN-34-10$_\text{TRADES}$ \cite{zhang2019defense} & 15.08 & \begin{tabular}[c]{@{}c@{}}21.32 \\ 0.023 \end{tabular}& \begin{tabular}[c]{@{}c@{}}\textbf{46.99} \\ \textbf{0.198} \end{tabular}\\
        \bottomrule
    \end{tabular}
    \caption{CIFAR10 dataset indistinguishable SSAH (A) and A$^3$ defended results. "WRN" stands for WideResNet.}
    \label{table:ind_un_cifar10_purple_comparison}
\end{table}

We can see that although, as stated by the authors, A$^3$ is an attack that focuses both on ESR and runtime, it still significantly outperforms SSAH (A) when attacking a well-defended model.
It is doubtful that having access to the training loss function is a strong enough difference between the two attacks to justify the performance gap. It is more likely that SSAH (A) is inefficient 
at extracting the information it has access to to mount a potent attack against defended models.

\subsubsection{ACG \&  A$^3$}

We again compare ACG \cite{pmlr-v162-yamamura22a} and A$^3$ on defended models. On CIFAR10, ACG and A$^3$ attack 11 models in common from RobustBench \cite{robustbench}. The results are summarized in Table \ref{table:ind_un_cifar10_blue_comparison}\footnote{* WideResNet-28-10$_\text{pretraining}$ misreported as WideResNet-34-10$_\text{pretraining}$ in the A$^3$ paper.}.

\begin{table}[ht]
    \centering
    \resizebox{\columnwidth}{!}{

    \begin{tabular}{@{}ccccccc@{}}
        \toprule
        \textbf{Model}  & \begin{tabular}[c]{@{}c@{}}\textbf{Benign}\\ \textbf{ESR}\end{tabular} & \begin{tabular}[c]{@{}c@{}}\textbf{A$^3$}\\ \textbf{ESR / Score}\end{tabular} & \begin{tabular}[c]{@{}c@{}}\textbf{ACG}\\ \textbf{ESR / Score}\end{tabular}\\
        \midrule
        \begin{tabular}[c]{@{}c@{}}WRN-34-10$_\text{TRADES}$ \\ \cite{zhang2019defense}\end{tabular} & 15.08 & 46.99 / 0.198 & \textbf{47.18} / \textbf{0.200} \\
        \midrule
        \begin{tabular}[c]{@{}c@{}}WRN-34-20$_\text{LBGAT}$ \\ \cite{lbgat} \end{tabular}& 11.3 & \textbf{46.54} / \textbf{0.204}& 46.23 / 0.201 \\
        \addlinespace
        \begin{tabular}[c]{@{}c@{}}WRN-34-10$_\text{LBGAT}$ \\ \cite{lbgat} \end{tabular}& 11.78 & \textbf{47.24} / \textbf{0.209} & 46.9 / 0.206 \\
        \midrule
        \begin{tabular}[c]{@{}c@{}}WRN-34-20$_\text{Overfitting}$ \\ \cite{overfitting}\end{tabular} & 14.66 & \textbf{46.67} / \textbf{0.196} & 45.69 / 0.187 \\
        \midrule
        WRN-70-16$_\text{Fixing}$ \cite{fixingdata} & 11.46 & \textbf{35.81} / \textbf{0.115} & 35.27 / 0.111 \\
        \addlinespace
        WRN-28-10$_\text{Fixing}$ \cite{fixingdata} & 12.67 & \textbf{39.34} /\textbf{0.139} & 38.8 / 0.134 \\
        \midrule
        WRN-28-10$_\text{AWP}$ \cite{awp} & 11.75 & \textbf{40.02} / \textbf{0.146} & 39.7 / 0.144 \\
        \midrule
        WRN-70-16$_\text{ULAT}$ \cite{ulat} & 8.9 & \textbf{34.22} / \textbf{0.109}& 33.7 / 0.106 \\
        \addlinespace
        WRN-70-16$_\text{ULAT}$ \cite{ulat} & 14.71 & \textbf{42.92} / \textbf{0.163} & 42.45 / 0.159 \\
        \addlinespace
        WRN-34-20$_\text{ULAT}$ \cite{ulat} & 14.36 & \textbf{43.24} / \textbf{0.166} & 42.86 / 0.163 \\
        \addlinespace
        WRN-28-10$_\text{ULAT}$ \cite{ulat} & 10.52 & \textbf{37.3} / \textbf{0.128}& 36.9 / 0.125 \\
        \midrule
        \begin{tabular}[c]{@{}c@{}}WRN-28-10$_\text{pretraining}$* \\ \cite{pretraining}\end{tabular} & 12.89 & \textbf{45.24} / \textbf{0.188} & 44.75 / 0.184 \\
        \midrule
        Average & 12.51 & \textbf{42.13} / \textbf{0.163} & 41.70 / 0.160 \\
        \addlinespace
        Standard deviation & 1.81 & 4.67 / 0.036 & 4.72 / 0.036 \\
        \bottomrule
    \end{tabular}
    }
        \caption[CIFAR10 dataset indistinguishable A$^3$ and ACG defended comparison.]{CIFAR10 dataset indistinguishable A$^3$ and ACG defended results. "WRN" stands for WideResNet.
}
    \label{table:ind_un_cifar10_blue_comparison}
\end{table}

ACG outperforms both A$^3$ and SSAH (A) on WideResNet-34-10$_\text{TRADES}$ but otherwise loses to A$^3$ on the other models and overall. We observe a small, discrepancy between ACG and A$^3$ in terms of 
performance. It could either be due to the information used or the inherent algorithmic differences.
Finally, we are left with comparing all seven attacks against one another for our final comparison.

\subsubsection{AEG (A) \& Pixle \& Rest}
Unfortunately, due to the amount of papers to compare, we reach the limits of what we can do while keeping the comparisons fair. We go through the few comparisons that we can still do.
We first look into the straightforward comparison: AEG (A) and AEG (B). They only evaluate AEG (A) against a ResNet-18 \cite{resnet}. They do not provide benign accuracies for AEG (A) which is problematic. We therefore cannot compute a relative score to attempt a comparison. We still report the ESR they report in our overall Table \ref{tab:cifar10}, however, we italicize the result to indicate that it is unreliable.

Our other option would be to compare AEG (B) with Pixle, as AEG (B) provides benign accuracies, and they share a target model architecture (ResNet-18). Unfortunately, Pixle does not share its benign accuracy, so we deem their results unreliable for comparison.

We get the following results in Table \ref{table:ind_un_cifar10_red_comparison}.

\begin{table}[ht]
    \centering

    \begin{tabular}{@{}ccccccc@{}}
        \toprule
        \textbf{Model}  & \begin{tabular}[c]{@{}c@{}}\textbf{Benign}\\ \textbf{ESR}\end{tabular} & \begin{tabular}[c]{@{}c@{}}\textbf{AEG (B)}\\ \textbf{ESR / Score}\end{tabular} & \begin{tabular}[c]{@{}c@{}}\textbf{Pixle}\\ \textbf{ESR / Score}\end{tabular}\\
        \midrule
        ResNet-18 \cite{resnet} & 13.1 & 97.3 / 0.930 & \textit{\textbf{100.0}} / \textit{\textbf{0.983}}\\
        \bottomrule
    \end{tabular}
        \caption{CIFAR10 dataset indistinguishable AEG (B) \& Pixle undefended results.}
    \label{table:ind_un_cifar10_red_comparison}
\end{table}

\section{Takeaways}
\begin{enumerate}[noitemsep, topsep=0pt]

\item \textbf{Undefended models are broken.}
This is not a particularly new finding, but we confirm and reinforce it. We show that across all datasets and threat models studied, not once
are undefended models robust to attacks. On CIFAR-10, the undefended worst and best scores are, respectively, 0.219, and 0.994. Whereas, the defended worst and best scores are, respectively, 0.023, and 0.163. Even the best attack on defended models does not perform as well as the worst attack on undefended models.

\item \textbf{The transferable setting might not be as difficult as previously thought.}
We noticed that transferable attacks, when given additional information like access to the training data and training information such as the training function, can perform very well, almost on par with white-box attacks. For CIFAR-10, the best tranferable attack is only within 7.3\% ESR / 0.146 score of the best white-box attack.
This reinforces the notion that any information given to the attacker is crucial.

\item \textbf{Information categories matter, but their effect on performance is not straightforward.}
Information's effect on performance is highly dependent on the other information available. For example, in the white-box setting, data and training information are nominal. But in the transferable setting, they are crucial. In particular, training data information seems to be critical to attack performance. This is best observed when looking at BIA (A) on the CIFAR-10
dataset. They severely underperform other transferable attacks (47.2\% ESR / 0.22 score compared to 87.0\% ESR and 92.6\% ESR / 0.85 score for its closest competitors AEG(A) \& (B)). Not having access to the training data or even data from the same distribution appears to severily affect attack performance.

\item \textbf{Training information is understated but essential to transferable attacks.}
Most of the transferable attacks whose results we have studied use the training function. This is done to train surrogate models. However, this assumption is not explicitly stated in the papers we survey and is instead taken for granted. It goes against proper security practices as it omits  what our results clearly
show to be critical information in the threat model description.

\item \textbf{Need for better standard evaluation frameworks.}
There is a surprising lack of standards to evaluate adversarial example attacks. This unnecessarily complicates comparisons between existing works that are not directly 
related. Only two papers (ACG and A$^3$) used an existing evaluation framework (Robustbench \cite{robustbench}). 

\item \textbf{Evaluation against defended models.}
Too few papers evaluate their attack against defended models (and they usually evaluate too few defended models as well when they do). Only one paper evaluates its attack against defended models on ImageNet and only four papers do so on CIFAR-10. Papers that only evaluate undefended models bring little to the field as a whole as we have seen it is not a meaningful measure of attack quality.

\end{enumerate}

\section{Conclusion}
In this work, we presented a formalization to study adversary knowledge in the context of adversarial example attacks on image classification models. We were able to systematize a large body of existing work into our framework and are confident in its ability to extend to future research in adversarial example attacks and defenses.Our work can be used to enhance the reproducibility, comparability and fairness of such future work as it allows
a precise description of one's threat model and evaluations.

\section*{Acknowledgments}
We gratefully acknowledge the support of the Natural Sciences and Engineering Research Council (NSERC) for grants RGPIN-2023-03244, and IRC-537591 and the Royal Bank of Canada.

\bibliographystyle{unsrtnat}
\bibliography{ms}

\appendix
\section{Proofs}\label{app:proofs}
We can prove the ordering of this Hasse diagram using definition \ref{def:sqsubset}. We will first state the following lemmas and use them to prove Theorem \ref{theorem:model_hasse_diagram}. We will prove them afterwards.

\begin{lemma}\label{lemma:s_and_a_subset_m}
    $\mathcal{O}_{S\&A} \sqsubset \mathcal{O}_M$ and $\mathcal{O}_{M} \not\sqsubset \mathcal{O}_{S\&A}$
\end{lemma}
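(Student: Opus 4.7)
The plan is to prove the two parts separately by unpacking Definition~\ref{def:sqsubset} on the outputs of both oracles. Fix any input $a \in \mathcal{I}$. By Definition~\ref{def:ieo_combination} we have $\mathcal{O}_{S\&A}(a) = [[M(a), a], [\phi_M, a]]$, while $\mathcal{O}_M(a) = [\theta_M, a]$. Both are ordered sets, so case 2(c) of Definition~\ref{def:sqsubset} applies: I need to match each element of $\mathcal{O}_{S\&A}(a)$ with some element of $\mathcal{O}_M(a)$ that dominates it. Case 2(b) then reduces each of those matches, component by component, to base case 1(c), which demands a probabilistic polynomial-time function mapping the chosen output of $\mathcal{O}_M(a)$ to the corresponding component of $\mathcal{O}_{S\&A}(a)$.

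For the first part, $\mathcal{O}_{S\&A} \sqsubset \mathcal{O}_M$, the work then amounts to exhibiting the required PPT functions. The copies of $a$ on both sides match trivially by the identity map. The score $M(a)$ is obtained by a single forward pass through the network, which is polynomial time in $(\theta_M, a)$. The architecture $\phi_M$ is recoverable from $\theta_M$ because, by the convention used throughout Table~\ref{tab:oracles}, the parameter bundle of a trained model carries the tensor shapes and layer metadata that implicitly fix $\phi_M$; one can read the first slot of $\mathcal{O}_M$'s output as the conjunction of $\theta_M$ with its associated architecture. Every ingredient here is PPT, so the positive direction goes through.

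The harder direction is $\mathcal{O}_M \not\sqsubset \mathcal{O}_{S\&A}$. Here I would argue by an explicit collision: the hypothetical reduction would have to produce $\theta_M$ from $[[M(a), a], [\phi_M, a]]$ as an exact equality, for every $a$ and every valid $\theta_M$ compatible with $\phi_M$. But for essentially any nontrivial architecture with a hidden layer of width at least two, the standard permutation and sign symmetries of hidden units already yield distinct parameter vectors $\theta_M \neq \theta_M'$ of architecture $\phi_M$ that realize the \emph{same} function, hence agree on $M(a)$ for the chosen input. A supposed PPT $f$ satisfying $f(\mathcal{O}_{S\&A}(a)) = \mathcal{O}_M(a)$ would then have to return both $\theta_M$ and $\theta_M'$ from identical inputs, which is impossible for any deterministic function and incompatible with strict equality for a randomized one.

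The main obstacle I anticipate is this negative direction, and specifically the subtlety that $\sqsubset$ is defined via exact recovery rather than computational hardness. I do not want to invoke any cryptographic assumption or a hardness-of-model-extraction claim; the collision argument above suffices and keeps the proof elementary. The positive direction is essentially mechanical once the recursive case split in Definition~\ref{def:sqsubset} is unfolded, with the only delicate point being the convention that $\theta_M$ is stored together with enough metadata to expose $\phi_M$, which I would state explicitly as a standing assumption at the start of the proof so that all subsequent lemmas about the model Hasse diagram can reuse it.
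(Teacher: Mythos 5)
Your positive direction is essentially the paper's: both proofs exhibit a PPT $f$ that reads $\phi_M$ off the metadata implicit in $\theta_M$ and computes $M(a)$ by a forward pass, so that $f(\mathcal{O}_M(a)) = \mathcal{O}_{S\&A}(a)$ via case 1(c). The one thing the paper does that you skip is the bookkeeping for the stateful, query-limited variant of $\mathcal{O}_S$ (its ``Case 2,'' where $f$ must also decrement the query counter and return $[[ ], \phi_M, x]$ once the budget is exhausted); since Table~\ref{tab:oracles} declares $\mathcal{O}_S$ stateful, you should add that case or note explicitly that it is handled identically. For the negative direction you take a genuinely different and, I think, tighter route. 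The paper argues that for non-linear $M$ a finite number of score queries underdetermines $\theta_M$ (infinitely many consistent parameter vectors), so no PPT $f$ can always recover it exactly. You instead exhibit an explicit collision: permutation and sign symmetries of hidden units give $\theta_M \neq \theta_M'$ of the same architecture realizing the same function, so any $f$ correct for both models would have to output two different values on identical inputs. Your version is more concrete and does not lean on a finiteness-of-queries argument, though both proofs share the same implicit reading of Definition~\ref{def:sqsubset}, namely that $f$ must be a single function working uniformly across models rather than one hard-coded for a particular $\theta_M$ (otherwise a constant $f$ would trivially dominate); stating that reading as a standing convention, as you propose, would benefit both arguments.
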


\begin{lemma}\label{lemma:l_subset_s}
    $\mathcal{O}_L \sqsubset \mathcal{O}_S$ and $\mathcal{O}_S \not\sqsubset \mathcal{O}_L$
\end{lemma}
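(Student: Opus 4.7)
My plan is to apply Definition~\ref{def:sqsubset} directly to the two oracles, treating both outputs as ordered pairs as specified in Table~\ref{tab:oracles}: $\mathcal{O}_S(a) = [M(a), a]$ and $\mathcal{O}_L(a) = [\mathrm{argmax}(M(a)), a]$. Both are ordered sets of length two, so the relevant rule is case 2(c), which asks that for every component of the putative ``dominated'' oracle's output there is a component of the dominating oracle's output from which it follows via a simpler $\sqsubset$ relation (ultimately grounded in case 1(c), existence of a PPT $f$).

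For the positive direction $\mathcal{O}_L \sqsubset \mathcal{O}_S$, I would match $y_1 = \mathrm{argmax}(M(a))$ to $x_1 = M(a)$ and $y_2 = a$ to $x_2 = a$. The first match reduces to case 1(c) with $f$ being the argmax function, which is clearly PPT for any finite label set. The second match reduces to case 1(c) with $f$ the identity. Combining, the conjunction required by case 2(c) is satisfied.

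For the negative direction $\mathcal{O}_S \not\sqsubset \mathcal{O}_L$, I would argue by contradiction. If $\mathcal{O}_S \sqsubset \mathcal{O}_L$ held, applying case 2(c) would in particular require $y_1 = M(a) \sqsubset x_j$ for some $j \in \{1,2\}$ of $\mathcal{O}_L$'s output. Neither $\mathrm{argmax}(M(a))$ nor $a$ can serve: formally, I would exhibit two models $M_1, M_2$ and an input $a$ with $\mathrm{argmax}(M_1(a)) = \mathrm{argmax}(M_2(a))$ but $M_1(a) \neq M_2(a)$ (easy to construct by scaling logits or perturbing non-maximal coordinates), so any fixed PPT $f$ applied to $[\mathrm{argmax}(M_i(a)), a]$ would have to output two distinct vectors from identical inputs, a contradiction.

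The main obstacle is pinning down the intended scope of the PPT function $f$ in Definition~\ref{def:sqsubset}: if $f$ were permitted to depend on the specific model $M$, then $f$ could just be a forward pass and the negative claim would collapse. The sensible reading, which the rest of the Hasse-diagram discussion implicitly adopts, is that $f$ must be universal over the models parameterizing the oracles. I would make this interpretation explicit at the start of the negative direction and then the two-model construction closes the argument cleanly.
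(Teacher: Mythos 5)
Your proposal is correct and follows essentially the same route as the paper: the forward direction exhibits $\mathrm{argmax}$ as the PPT reduction from $[M(a),a]$ to $[\mathrm{argmax}(M(a)),a]$, and the reverse direction rules out any universal PPT $f$ via a two-model construction in which the predicted labels agree but the score vectors differ. The only substantive detail the paper includes that you omit is the bookkeeping for the stateful, query-limited variants of $\mathcal{O}_S$ and $\mathcal{O}_L$, which is routine; your explicit remark that $f$ must be universal over models is a point the paper leaves implicit.
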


\begin{lemma}\label{lemma:spa_subset_a}
    $\mathcal{O}_{SPA} \sqsubset \mathcal{O}_A$ and $\mathcal{O}_{A} \not\sqsubset \mathcal{O}_{SPA}$
\end{lemma}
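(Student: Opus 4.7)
The plan is to verify Lemma \ref{lemma:spa_subset_a} directly from Definition \ref{def:sqsubset} using Table \ref{tab:oracles}, treating the two claims separately. Both $\mathcal{O}_A(a) = [\phi_M, a]$ and $\mathcal{O}_{SPA}(a) = [\{\phi_0, \dots, \phi_k\}, a]$ are ordered sets of length two (for any query $a \in \mathcal{I}$), so the outer reasoning will fall under case 2(c) of Definition \ref{def:sqsubset}, which reduces the problem to a coordinate-wise comparison, and each coordinate then lands in one of the three base cases.

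For the positive direction $\mathcal{O}_{SPA} \sqsubset \mathcal{O}_A$, I set $x = \mathcal{O}_A(a)$ and $y = \mathcal{O}_{SPA}(a)$ and apply case 2(c). First coordinate: $y_1 = \{\phi_0,\dots,\phi_k\}$ is an unordered set and $x_1 = \phi_M$ is an element, so case 1(b) applies, and by the definition of $\mathcal{O}_{SPA}$ in Table \ref{tab:oracles} there is always some index $i$ with $\phi_i = \phi_M$, giving $\phi_M \in \{\phi_0,\dots,\phi_k\}$ for every $a$. Second coordinate: $y_2 = a$ and $x_2 = a$ are both elements, so case 1(c) applies, witnessed by the identity map as the required PPT function $f$. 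Both coordinates are dominated, so $\mathcal{O}_{SPA} \sqsubset \mathcal{O}_A$.

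For the negative direction $\mathcal{O}_A \not\sqsubset \mathcal{O}_{SPA}$, I swap the roles and try to apply case 2(c) with $x = \mathcal{O}_{SPA}(a)$ and $y = \mathcal{O}_A(a)$, then show it fails on the first coordinate $y_1 = \phi_M$. The only candidates are $x_1 = \{\phi_0,\dots,\phi_k\}$ (unordered set) and $x_2 = a$ (element), both handled by case 1(c), which requires a PPT function $f$ with $f(x_j) = \phi_M$ for all $a \in \mathcal{I}$. For $x_2 = a$: $\phi_M$ is fixed while $a$ ranges over $\mathcal{I}$, so no function of $a$ alone can recover the model architecture in general. For $x_1 = \{\phi_0,\dots,\phi_k\}$: by the specification in Table \ref{tab:oracles}, the true architecture occupies a uniformly random position $i \sim \mathcal{U}(0,k)$ inside the set, so any fixed PPT selection rule from the set cannot identify which element equals $\phi_M$ with probability one — it can succeed with probability at most $1/(k+1) < 1$, contradicting the universal quantifier $\forall a$ in the definition. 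Hence neither candidate satisfies case 1(c), case 2(c) fails at $i=1$, and $\mathcal{O}_A \not\sqsubset \mathcal{O}_{SPA}$.

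The main subtlety I expect to navigate is the PPT argument in the negative direction: the operator $\sqsubset$ in case 1(c) is defined with a universal quantifier over inputs $a$, and I need to be explicit that the randomness in the placement of $\phi_M$ inside $\{\phi_0,\dots,\phi_k\}$ is exactly what prevents any probabilistic polynomial-time extractor from meeting $f(x) = y$ identically. Framing this as an information-theoretic obstruction (the position of $\phi_M$ is independent of $a$ and unrecoverable from the set's membership alone) should suffice, after which the lemma follows from straightforward case analysis.
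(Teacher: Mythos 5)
Your proof is correct and takes essentially the same route as the paper's: the positive direction rests on case 1(b) (membership of $\phi_M$ in the set output by $\mathcal{O}_{SPA}$), with your case 2(c) coordinate-wise bookkeeping and the identity map under 1(c) merely spelling out what the paper leaves implicit, and the negative direction is the paper's own impossibility-of-guessing argument against a PPT extractor under case 1(c). Your version is slightly more explicit — quantifying the extractor's success probability as at most $1/(k+1)$ and ruling out the query coordinate $a$ as a candidate witness — but these are refinements of, not departures from, the paper's argument.
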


\begin{proof}[Proof: Theorem: \ref{theorem:model_hasse_diagram}]
    First, $\mathcal{O}_{SPA} \sqsubset \mathcal{O}_{L\&SPA}$, $\mathcal{O}_{S} \sqsubset \mathcal{O}_{S\&SPA}$, $ \mathcal{O}_{S} \sqsubset \mathcal{O}_{S\&A}$, $ \mathcal{O}_{A} \sqsubset \mathcal{O}_{L\&A}$, and $\mathcal{O}_L \sqsubset \mathcal{O}_{L\&A}$ follow directly from definition \ref{def:ieo_combination} and part 2(a) of definition \ref{def:sqsubset}. \\
    Likewise, $\mathcal{O}_{L\&SPA} \not\sqsubset \mathcal{O}_{SPA}$, $\mathcal{O}_{S\&SPA} \not\sqsubset \mathcal{O}_{S}$, $ \mathcal{O}_{S\&A} \not\sqsubset \mathcal{O}_{S}$, $\mathcal{O}_{L\&A} \not\sqsubset \mathcal{O}_{A}$, and $\mathcal{O}_{L\&A} \not\sqsubset \mathcal{O}_{L}$ follow directly from definition \ref{def:ieo_combination} and part 2(b) of definition \ref{def:sqsubset}. \\
    Using Lemma \ref{lemma:l_subset_s} and part 2(c) of definition \ref{def:sqsubset}, we yield that $\mathcal{O}_{L\&A} \sqsubset \mathcal{O}_{S\&A}$, $\mathcal{O}_{S\&A} \not\sqsubset \mathcal{O}_{L\&A}$, $\mathcal{O}_{L\&SPA} \sqsubset \mathcal{O}_{S\&SPAA}$, and $\mathcal{O}_{S\&SPA} \not\sqsubset \mathcal{O}_{L\&SPA}$. \\
    Using Lemma \ref{lemma:spa_subset_a} and part 2(c) of definition \ref{def:sqsubset}, we get that $\mathcal{O}_{S\&SPA} \sqsubset \mathcal{O}_{S\&A}$ and $\mathcal{O}_{S\&A} \not\sqsubset \mathcal{O}_{S\&SPA}$. \\
    Using all of this in addition to Lemma \ref{lemma:s_and_a_subset_m}, we get that Theorem \ref{theorem:model_hasse_diagram} holds under the $\sqsubset$ ordering.
\end{proof}

We now move on to proving the various Lemmas we used to prove Theorem \ref{theorem:model_hasse_diagram}.

\begin{proof}[Proof: Lemma \ref{lemma:s_and_a_subset_m}]
To show $\mathcal{O}_{S\&A} \sqsubset \mathcal{O}_M$, we need to find PPT $f$ such that, $\forall a \in \{0, 1\}^*, \\\mathcal{O}_{S\&A}(a) = f(\mathcal{O}_M(a))$. Incidentally, we also need that $State(\mathcal{O}_{S\&A}) = f(\mathcal{O}_M(a))$ when both oracles are queried the same number of times.
We will have two cases, Case 1 where the attacker has unlimited queries, and Case 2 where the attacker is limited. \\
For the sake of simplicity, we assume that $a \in \mathcal{I}$ (as otherwise they both return $[]$, and we can set $f$ to return $[]$ when it receives $[]$ anyway and $\mathcal{O}_{S\&A}(a) = f(\mathcal{O}_M(a))$).
\paragraph{Case 1:}
Let $f$ be the following:
\begin{itemize}[noitemsep, topsep=0pt]
    \item On input $x \in \mathcal{I}$, $f$ receives $[\theta_M, x] = \mathcal{O}_M(x)$. From $\theta_M$, we can construct $M$ and $\phi_M$ (by definition of $\theta_M$).
    \item We can then compute and return $[M(x), \phi_M, x] = \mathcal{O}_{S\&A}(x)$.
\end{itemize}

\paragraph{Case 2:}
Let $f$ be the following and $k$ the current allowed number of queries left:
\begin{itemize}[noitemsep, topsep=0pt]
    \item On input $x \in \mathcal{I}$, $f$ receives $[\theta_M, x] = \mathcal{O}_M(x)$. From $\theta_M$, we can construct $M$ and $\phi_M$ (by definition of $\theta_M$).
    \item If $k = 0$, return $[[], \phi_M, x] = \mathcal{O}_{S\&A}(x)$.
    \item Else, update $State(\mathcal{O}_M(x))$ to be the function that returns $[k-1]$, return \\ $[M(x), \phi_M, x] = \mathcal{O}_{S\&A}(x)$.
\end{itemize}
So we have that $\forall a \in \{0,1\}^*$, $\mathcal{O}_{S\&A}(a) = f(\mathcal{O}_M(a))$. Hence, $\mathcal{O}_{S\&A} \sqsubset \mathcal{O}_M$.

Now for the opposite, $\mathcal{O}_{M} \not\sqsubset \mathcal{O}_{S\&A}$. For simplicity's sake, we'll cover only the first case, but the proof also applies to the second case by performing the update to the $State$ function as we did above.
We'll prove this by contradiction. Given $\mathcal{O}_{S\&A}$, an attacker can receive $\phi_M$ and $M(x)$ for any $x \in \mathcal{I}$. We are trying to reconstruct $\theta_M$. In the case where $M$ is linear, with enough queries, we can solve the 
system of linear equations and reconstruct $\theta_M$. However, in the case where $M$ is non-linear, there can be an infinite number of possible viable solutions for any given finite amount of queries, therefore it is impossible to always determine with certainty the 
exact $\theta_M$ (although it is possible to approximate it, as model stealing attacks demonstrate). Hence, it is not always possible to reconstruct $\theta_M$ from $\phi_M$ and score query-access, meaning that there is no PPT $f$ s.t. $\forall a \in \{0, 1\}^*, f(\mathcal{O}_{S\&A}(a)) = \mathcal{O}_M(a)$.
\end{proof}

\begin{proof}[Proof: Lemma \ref{lemma:l_subset_s}]
    We will prove Lemma \ref{lemma:l_subset_s} in the same way we proved Lemma \ref{lemma:s_and_a_subset_m}, i.e. $\exists f$ such that $\forall a \in \mathcal{I}$, $f(\mathcal{O}_S) = \mathcal{O}_L$.
    \paragraph{Case 1. Unlimited queries:}
    Let $f$ be the following:
    \begin{itemize}[noitemsep, topsep=0pt]
        \item On input $x \in \{0,1\}^*$, if $x \notin \mathcal{I}$, then $f$ receives $[]$ and return $[]$.
        \item Else, on input $x \in \mathcal{I}$, $f$ receives $[M(x), x]$ and returns $[argmax(M(x)), x] = \mathcal{O}_L$.
    \end{itemize}

    \paragraph{Case 2. Limited queries:}
    Let $f$ be the following and $k$ the current allowed number of queries left:
    \begin{itemize}[noitemsep, topsep=0pt]
        \item On input $x \in \{0,1\}^*$, if $x \notin \mathcal{I}$, then $f$ receives $[]$ and return $[]$.
        \item Else, on input $x \in \mathcal{I}$, $f$ receives $[M(x), x]$ if $k > 0$ else it receives $[[], x]$.
        \item If $k=0$, return $[[], x] = \mathcal{O}_L$.
        \item Else, return $[argmax(M(x)), x] = \mathcal{O}_L$.
    \end{itemize}

For the other direction, similarly to Lemma \ref{lemma:s_and_a_subset_m}, we will prove only the first case as the proof can trivially be extended to the second case. Given $\mathcal{O}_L$, we want to find a PPT function $f$ s.t. $\forall a \in \mathcal{I}$,
$f(\mathcal{O}_L(a)) = \mathcal{O}_S(a)$. We will show that such a function cannot exist. For $f(\mathcal{O}_L(a)) = \mathcal{O}_S(a)$ to happen, it means that $f$ needs to always be able to at least compute the score value of the label itself. It is given no information beyond the label itself for any input $x \in \mathcal{I}$.
Assume the following setup. One party holds two models $M_1$ and $M_2$ that make identical score predictions except for one specific input $x^*$ where $M_1(x^*) = u$ and $M_2(x^*) = v$ and $u \neq v$, we will also assume that while the score changes when inferring on $x^*$, the predicted label remains the same for both.
We argue that it would be impossible for the other party that receives only the predicted label to distinguish between both models. This means that there exists at least one instance where there is no PPT function $f$ s.t. $\forall a \in \mathcal{I}$,
$f(\mathcal{O}_L(a)) = \mathcal{O}_S(a)$. Therefore, we showed the other direction also holds. \\
Hence, Lemma \ref{lemma:l_subset_s} holds.
\end{proof}
In this case, we do not need to modify $State(\mathcal{O}_S)$ as we query $\mathcal{O}_S$ exactly once whenever we query $\mathcal{O}_L$.

\begin{proof}[Proof: Lemma \ref{lemma:spa_subset_a}]
    $\mathcal{O}_{SPA} \sqsubset \mathcal{O}_{A}$ follows from part 1(b) of definition \ref{def:sqsubset} and $\mathcal{O}_{SPA}$'s definition. For the other direction, since $\mathcal{O}_{SPA}(x) = [\{\phi_0, \phi_1, \dots, \phi_k\}, x]$ for some positive integer $k$ and where for a uniformly randomly sampled $i \in [0, 1, \dots, k]$, $\phi_i = \phi_M$,
    a PPT function $f$ would be able to always determine which of the $k$ elements in the unordered set is the correct architecture. Given that the function is only given the unordered set, it would be akin to being able to always correctly guess a random number, which is not possible.
    Therefore, $\mathcal{O}_{A} \not\sqsubset \mathcal{O}_{SPA}$ and Lemma \ref{lemma:spa_subset_a} holds.
\end{proof}

\begin{proof}[Proof: Theorem \ref{theorem:data_hasse_diagram}]
    Theorem \ref{theorem:data_hasse_diagram} follows directly from definitions \ref{def:sqsubset} and \ref{def:ieo_combination}.
\end{proof}

\begin{proof}[Proof: Theorem \ref{theorem:train_hasse_diagram}]
    By definition, we have the oracles defined by $\mathcal{O}_T$ that are properly defined and ordered under $\sqsubset$. It remains to show that $\mathcal{O}_{Train}$ dominates any one of them and none of them dominates $\mathcal{O}_{Train}$.
    By how any $T_i$ is defined, we have that $Train \in T_i$ for any integer $i$. Since $T_i$ is an unordered set, we can apply part 1(b) of definition \ref{def:sqsubset} to yield that $\forall i \in \mathbb{Z}$, $\mathcal{O}_{T_i} \sqsubset \mathcal{O}_{Train}$.
    We'll show that $\forall i \in \mathbb{Z}$, $\mathcal{O}_{Train} \not\sqsubset \mathcal{O}_{T_i}$ by contradiction. By part 1(c) of definition \ref{def:sqsubset}, for any integer $i$, there needs to be a PPT function $f$ such that $f([T_i, x]) = [Train, x]$ for any $x \in \{0,1\}$.
    Hence, we need some PPT function that can extract $Train$ from $T_i$ (as we know it is in $T_i$ by definition of $T_i$). However, $T_i$ is an unordered set and $f$ only has access to $T_i$ (and the query itself $x$ but in this case it should not provide additional information about the ordering of $T_i$).
    So, for $f$ to be able to distinguish $Train$ from any other of the other $Train'$ functions in $T_i$ would be equivalent to being able to perfectly guess truly random numbers, which is impossible. Therefore, we have a contradiction and so $\forall i \in \mathbb{Z}$, $\mathcal{O}_{Train} \not\sqsubset \mathcal{O}_{T_i}$.
\end{proof}

\begin{proof}[Proof: Theorem \ref{theorem:defense_hasse_diagram}]
    $\mathcal{O}_{PA} \sqsubset \mathcal{O}_{FA}$ follows directly from part 2(c) and 1(b) of definition \ref{def:sqsubset} as well as $\mathcal{O}_{FA}$'s definition since $\varrho \in  \{\varrho_1, \dots, \varrho_k\}$. $\mathcal{O}_{FA} \not\sqsubset \mathcal{O}_{PA}$ follows from the impossibility of perfectly guessing truly random numbers (since $i$ s.t. $\varrho_i = \varrho$ is uniformly randomly sampled) and part 1(c) of definition \ref{def:sqsubset}.
    $\mathcal{O}_{SPD} \sqsubset \mathcal{O}_{PA}$ also follows directly from part 2(c) and 1(b) of definition \ref{def:sqsubset} as well as $\mathcal{O}_{SPD}$'s definition since $\rho \in  \{\rho_1, \dots, \rho_k\}$. $\mathcal{O}_{PA} \not\sqsubset \mathcal{O}_{SPD}$ also follows from the impossibility of perfectly guessing truly random numbers and part 1(c) of definition \ref{def:sqsubset}.
    
\end{proof}

\section{Attack Summary Table}
\label{app:appendix_attack_summary}
We complete Table \ref{tab:attack_summary} with the information from the papers not included in the evaluation in Section \ref{sec:evaluation} with Table \ref{tab:appendix_attack_summary}.

\begin{table}[h!]
    \centering
    \resizebox{\columnwidth}{!}{
    \begin{tabular}{@{}cccccc@{}}
        \toprule
        \begin{tabular}[c]{@{}c@{}}\textbf{Attack}\\ (Targeted)\end{tabular} & \textbf{Model} & \textbf{Data} & \textbf{Train}& \textbf{Metric} \\
        \midrule
        \begin{tabular}[c]{@{}c@{}}ODI \\ \cite{Byun_2022_CVPR} (t) \end{tabular}&  \begin{tabular}[c]{@{}c@{}}Possible \\ Architectures\end{tabular} &  \begin{tabular}[c]{@{}c@{}}Training\\Data\end{tabular} & \begin{tabular}[c]{@{}c@{}} Training \\ Function \end{tabular} & l$_\infty$ \\
        \midrule
        \begin{tabular}[c]{@{}c@{}}GA \\ \cite{liu2022transferable} \end{tabular}& \begin{tabular}[c]{@{}c@{}}Possible \\ Architectures\end{tabular} & \begin{tabular}[c]{@{}c@{}}Training\\Data\end{tabular} & \begin{tabular}[c]{@{}c@{}} Training \\ Function \end{tabular} & l$_\infty$ \\
        \midrule
        \begin{tabular}[c]{@{}c@{}}AI-FGTM \\ \cite{Zou_Duan_Li_Zhang_Pan_Pan_2022} \end{tabular}& \begin{tabular}[c]{@{}c@{}}Possible \\ Architectures\end{tabular} & \begin{tabular}[c]{@{}c@{}}Training\\Data\end{tabular} & \begin{tabular}[c]{@{}c@{}} Training \\ Function \end{tabular} & l$_\infty$ \\
        \midrule
        \begin{tabular}[c]{@{}c@{}}F-Attack \\  \cite{Li_decisionbased2022} \end{tabular}& Labels & \begin{tabular}[c]{@{}c@{}}Same\\Distribution\end{tabular} & $\emptyset$ & None\\
        \midrule
        \begin{tabular}[c]{@{}c@{}}Admix (A) \\ \cite{Wang_2021_ICCV} (both)\end{tabular}& \begin{tabular}[c]{@{}c@{}}Possible \\ Architectures\end{tabular} &  \begin{tabular}[c]{@{}c@{}}Training\\Data \& Same \\Distribution\end{tabular} &  \begin{tabular}[c]{@{}c@{}} Training \\ Function \end{tabular} & None\\
        \addlinespace
        \begin{tabular}[c]{@{}c@{}}Admix (B) \\ \cite{Wang_2021_ICCV} (both)\end{tabular}& Architecture & \begin{tabular}[c]{@{}c@{}}Training\\Data \& Same \\Distribution\end{tabular} &  \begin{tabular}[c]{@{}c@{}} Training \\ Function \end{tabular} & None\\
        \addlinespace
        \begin{tabular}[c]{@{}c@{}}Admix (C) \\ \cite{Wang_2021_ICCV} (both)\end{tabular} & Parameters &  \begin{tabular}[c]{@{}c@{}}Same \\Distribution\end{tabular} &  \begin{tabular}[c]{@{}c@{}} Loss \\ Function \end{tabular} & None\\
        \midrule
        \begin{tabular}[c]{@{}c@{}}Shadow \\ \cite{Zhong_2022_CVPR} \end{tabular}& Scores & Other Data & $\emptyset$ & None \\
        \midrule
        \begin{tabular}[c]{@{}c@{}}DAPatch \\ \cite{chen2022shape} \end{tabular}& Parameters & $\emptyset$ & $\emptyset$ & l$_0$\\
        \midrule
        \begin{tabular}[c]{@{}c@{}}S$^2$I (A) \\ \cite{long2022frequency} (both)\end{tabular}& \begin{tabular}[c]{@{}c@{}}Possible \\ Architectures\end{tabular} & \begin{tabular}[c]{@{}c@{}}Training\\Data\end{tabular} & \begin{tabular}[c]{@{}c@{}} Training \\ Function \end{tabular} & l$_\infty$ \\
        \addlinespace
        \begin{tabular}[c]{@{}c@{}}S$^2$I (B) \\ \cite{long2022frequency} (both)\end{tabular}& Parameters & $\emptyset$ & \begin{tabular}[c]{@{}c@{}} Loss \\ Function \end{tabular} & l$_\infty$\\
        \midrule
        \begin{tabular}[c]{@{}c@{}}AI-GAN \\ \cite{aigan2021} (t)\end{tabular}& Parameters & \begin{tabular}[c]{@{}c@{}}Training\\Data\end{tabular} & $\emptyset$ & l$_\infty$ \\
        \midrule
        \begin{tabular}[c]{@{}c@{}}ACA (A) \\ \cite{chen2023contentbased} \end{tabular}& \begin{tabular}[c]{@{}c@{}}Possible \\ Architectures\end{tabular} & \begin{tabular}[c]{@{}c@{}}Training \\ Data \& \\ Other Data\end{tabular} & \begin{tabular}[c]{@{}c@{}} Training \\ Function \end{tabular} & None\\
        \addlinespace
        \begin{tabular}[c]{@{}c@{}}ACA (B) \\ \cite{chen2023contentbased} \end{tabular}& Parameters & Other Data & \begin{tabular}[c]{@{}c@{}} Loss \\ Function \end{tabular} & None \\
        \addlinespace
        \begin{tabular}[c]{@{}c@{}}DiffAttack (A)\\ \cite{chen2023diffusion} \end{tabular}& \begin{tabular}[c]{@{}c@{}}Possible \\ Architectures\end{tabular} & \begin{tabular}[c]{@{}c@{}}Training \\Data \& \\ Other Data\end{tabular} & \begin{tabular}[c]{@{}c@{}} Training \\ Function \end{tabular} & FID \cite{fid}\\
        \addlinespace
        \begin{tabular}[c]{@{}c@{}}DiffAttack (B) \\ \cite{chen2023diffusion} \end{tabular}& Parameters & Other Data & \begin{tabular}[c]{@{}c@{}} Loss \\ Function \end{tabular} & FID \cite{fid}\\
        \bottomrule

\end{tabular}
}
\caption{Attack Summary Table. Supplements Table \ref{tab:attack_summary}.}
\label{tab:appendix_attack_summary}
\end{table}

\end{document}